\theoremstyle{plain}
\newtheorem{theorem}{Theorem}[section]
\newtheorem{proposition}[theorem]{Proposition}
\newtheorem{lemma}[theorem]{Lemma}
\theoremstyle{definition}
\newtheorem{definition}[theorem]{Definition}
\newtheorem{assumption}[theorem]{Assumption}
\theoremstyle{remark}
\newtheorem{remark}[theorem]{Remark}
\newcommand{\R}{\mathbb{R}}
\newcommand{\1}{\mathbf{1}}
\title{Explicit Credit Assignment through Local Rewards and Dependence Graphs}
\author{%
  Bang Giang Le$^{1}$ \quad Viet Cuong Ta$^{1}$\thanks{Corresponding author.}\\
  $^1$Human-Machine Interaction Laboratory\\
  VNU University of Engineering and Technology, Hanoi, Vietnam\\
  \texttt{\{giangbang, cuongtv\}@vnu.edu.vn} \\
}
\begin{document}

\maketitle

\begin{abstract}
To promote cooperation in Multi-Agent Reinforcement Learning, the reward signals of all agents can be aggregated together, forming global rewards that are commonly known as the fully cooperative setting. 
However, global rewards are usually noisy because they contain the contributions of all agents, which have to be resolved in the credit assignment process. On the other hand, using local reward benefits from faster learning due to the separation of agents' contributions, but can be suboptimal as agents myopically optimize their own reward while disregarding the global optimality. 
In this work, we propose a method that combines the merits of both approaches.
By using a dependence graph between agents' interaction, our method discerns the individual agent contribution in a more fine-grained manner than a global reward, while alleviating the cooperation problem with agents' local reward. 
We also introduce a practical approach for approximating such a graph.
We test the proposed approach with known dependence graphs and learned dependence graphs in fully cooperative settings.
The results demonstrate the flexibility of the approach, enabling improvements over the traditional local and global reward settings.
\end{abstract}

\section{Introduction}

Cooperation between multiple agents in Multi-Agent Reinforcement Learning (MARL) can be understood as the collective effort whereby multiple agents jointly pursue a shared objective, thereby achieving a better outcome that might be unattainable by individual agents alone. 
Such collaboration is crucial in real-world applications, where tasks such as autonomous driving \cite{shalev2016safe}, distributed voltage control \cite{wang2021multi}, or swarm coordination \cite{cao2012overview} are often too complex for isolated agents to solve effectively.
As a result, Cooperative MARL remains a central research direction within multi-agent learning \cite{yuan2023survey}. 

The simplest approach to enforce cooperation within a group of agents is through reward aggregation (scalarization), where the rewards of individuals are replaced by the sum (average) of the agents' rewards in the whole group \cite{samvelyan2019starcraft, lowe2017multi}. 
In this way, all agents share a common objective, which is the reward of the whole group; the loss of the group is the loss of every agent. This setting generally goes under the name \textit{fully cooperative} settings in the literature \cite{yu2022surprising, rashid2020monotonic}.
Rewards scalarization, however, introduces the problem of \textit{credit assignment} \cite{chang2003all, foerster2018counterfactual}, where the information on the contributions of each agent is lost through the scalarization operations. 

Alternatively, in a general \textit{cooperative} setting, individual rewards are kept separate, and each agent optimizes its own objective independently, which effectively reduces the Multi-agent problem to a set of single-agent RL tasks \cite{de2020independent}. 
This learning approach is often referred to as \textit{independent learning}, with examples including IQL \cite{tan1993multi}, IPPO \cite{de2020independent}, and MADDPG \cite{lowe2017multi}. In this approach, each agent optimizes its own reward signal myopically, and therefore, agents only cooperate when cooperation is beneficial to them.
In this case, cooperation can only emerge implicitly \cite{zheng2018magent}. 
Learning from local rewards can potentially lead to selfish behaviors that can be suboptimal to the entire system \cite{devlin2014potential, le2025toward}.

In fully cooperative MARL, it is often assumed that all agents share a single global reward.
To mitigate the resulting credit assignment problem, various methods have been proposed, such as value decomposition \cite{sunehag2017value} and counterfactual learning \cite{foerster2018counterfactual}.
However, in many scenarios, the team’s reward can be naturally attributed to the contributions of specific subsets of agents, in which case local rewards can provide more fine-grained information about overall performance.
Consequently, aggregating these signals into a single scalar and then attempting to recover them appears inherently inefficient. 
Motivated by this observation, we ask 
\begin{quote}
\textit{Can we directly leverage the individual-agent reward information to effectively avoid the credit assignment issues introduced by reward scalarization, while still promoting cooperation?}
\end{quote}

Building on the framework of dependence graphs and Multi-Agent Networked MDPs \citep{qu2020scalable}, we combine local reward signals in a way that respects the credit of each agent.
Our main contributions in this work are three-fold. 
First, we propose a novel policy gradient estimator that truncates irrelevant local rewards based on interaction paths in the dependence graph, effectively alleviating the credit assignment problem. 
Second, we prove that our method provides a smooth transition between the local and global reward learning depending on the density of the dependence graphs in terms of sample complexity.
Third, we introduce a practical method for approximating the dependence graph based on our theory.
Empirically, we validate the proposed approach on a range of benchmarks in MARL. Experimental results show that our methods can be competitive with and outperform strong baselines from both local and global reward settings.

\section{Problem Setting and Notation}\label{sec:settings}

\textbf{Markov Game.} We formulate the MARL problem as a Markov game, defined by the tuple $\mathcal M =\langle \mathcal{N}, \boldsymbol{\mathcal{S}}, \boldsymbol{\mathcal{A}}, \mathbf P, \mathbf r, \gamma \rangle$. Here, $\mathcal{N} = \{1, 2, \dots, N\}$ denotes the set of $N$ agents, $\boldsymbol{\mathcal{S}}$ is the state space, and $\boldsymbol{\mathcal{A}} = \mathcal{A}_1 \times \dots \times \mathcal{A}_N$ represents the joint action space, where $\mathcal{A}_i$ is the action space of agent $i$. The function $\mathbf P$ defines the transition dynamics, $\mathbf r:\boldsymbol{\mathcal S\times \boldsymbol{\mathcal A}}\to [0, R_{\max}]^N$ is the joint reward function, and $\gamma \in [0, 1)$ is the discount factor.
Throughout the paper, we use bold notation to indicate joint quantities: for example, the joint action at time step $t$ is denoted by $\boldsymbol{a}_t = (a_t^1, \dots, a_t^N)$, and the joint policy by $\boldsymbol{\pi} = (\pi_1, \dots, \pi_{N})$. At each time step $t$, all agents \textit{independently} select actions according to their policies, forming the joint action $\boldsymbol{a}_t$, upon which the environment transitions to the next state $s_{t+1}$ according to the transition kernel $\mathbf P$.
Additionally, let $\rho_{\boldsymbol {\pi}} (\mathbf s) = (1-\gamma)\sum_{t=0}^\infty \gamma^t \mathbb P_{\boldsymbol{\pi}}(\mathbf {s}_t = \mathbf s)$ be the discounted marginal state distribution of policy $\boldsymbol{\pi}$.

\textbf{Networked Multi-Agent MDP.}
Many practical MARL problems exhibit local dependence structures, where agents may interact with a small subset of neighbors and are relatively independent of other outsider agents. 
To facilitate such structural dependency, we utilize the framework of networked Markov systems \citep{bagnell2005local, qu2019exploiting, qu2020scalable}. 
More specifically, we assume that the joint state space $\boldsymbol{\mathcal{S}}$ can be decomposed into substate spaces of individual agents, i.e., $\boldsymbol{\mathcal S}=\mathcal S_1\times \dots \times \mathcal S_N$,
and an agent can only observe their respective local states, $\pi_i: {\mathcal  {S}}_i\to \Delta(\mathcal A_i)$.
The transition kernel $\mathbf P$ also factorizes accordingly: the
next state of agent $i$ depends only on a local agent-dependence set, denoted by $\text{Pa}^i_{}(s^i) \subseteq  [N]$, with
\[
P_i(s^{i}{'}|\mathbf s, \mathbf a)=P_i \bigl( s^{i}{'} \,\big|\,  \{s^k, a^k : k \in \text{Pa}^i_{}(s^i) \}\bigr)
\in \Delta(\mathcal{S}_i).
\]
Note that the agent dependence set depends on the local states; agents can infer from their observations which other agents can affect their next state.
We assume $i \in \text{Pa}_{}^i( s^i)$, that is, each agent always influences its own next state. 
This decomposition structure naturally forms a \emph{(state-)dependence graph}, which we define later in this section.

Each agent $i$ also receives an individual reward
$r^i : \mathcal{S}_i \times \mathcal{A}_i \to [0, R_{\max}]$ based on their states and actions.
For clarity of exposition in the main text, we restrict to the
case where $r^i$ depends only on the local state-action
pair $(s^i,a^i)$. More general settings of the reward functions can be naturally captured by introducing a second \emph{reward-dependence graph}. 
The reward-dependence graph captures myopic interactions: the immediate effect
to current rewards, future propagation is already captured by the state-dependence graph. Since this extension does not alter the main theory and can be incorporated with minor modifications, we defer its full formalization to Appendix \ref{app:reward-graph}.

We define the cumulative returns of agent~$i$ under the joint policy~$\boldsymbol{\pi}$ as \(J^i(\boldsymbol{\pi})= \mathbb E_{\boldsymbol{\pi}}\!\left[\sum_{t=0}^{\infty} \gamma^t r_t^i \right].\)
In the fully cooperative setting, agents optimize a shared objective by maximizing the team’s total expected return,
\(J(\boldsymbol{\pi})= \sum_{i=1}^N J^i(\boldsymbol{\pi}),\) with $r_t = \sum_{i=1}^N r_t^i$ the scalarized reward used as a common learning signal for all agents.
Throughout the paper, we refer to the scalarized reward $r_t$ as \textit{global reward} to distinguish it from the \emph{local reward} $r^i_t$.  


The factorization of the global states resembles the partial observability property, where each agent only observes parts of the underlying states. However, unlike partial observation, the transition kernel directly operates on the partial states.
Furthermore, one can recover the standard formulation by duplicating the true states, $\mathbf s = (s, \dots, s)$, and by defining the transition kernel $P_i$ to include all the agents in the set $\text {Pa}^i_{\cdot}$. 
Our formulation differs from prior work in the underlying dependence structure. In \citet{qu2019exploiting}, the dependencies form a tree, which corresponds to a dependence set of size one in our framework. Other works \citep{qu2020scalable, jing2024distributed} consider a graph-structured formulation, but the graph is fixed, whereas in our setting, the dependence graph is dynamic and can vary across states, which admits static graphs as special cases.


\textbf{State Dependence Graph.}
Based on the Dependence structure, we define the {dependence graph}, in which a directed edge exists between two agents if one agent influences the other's next states.

\begin{definition}[State Dependence Graph induced by a Networked Multi-Agent MDP]
Let $\mathcal M$ be a Networked Multi-Agent MDP.
The \emph{State Dependence Graph} of $\mathcal M$ is a directed graph
$\mathcal G = \langle \mathcal V, \mathcal E\rangle$, where $\mathcal V = \{ (\mathbf s, i) \mid \mathbf s \in \mathcal S,\; i\in [N] \}$ is the vertex set and $\mathcal E$ is the edge set. 
A directed edge
\(
(\mathbf s, i) \to (\mathbf s', j)
\)
belongs to $\mathcal E$ if $i \in \mathrm{Pa}^j_{}(s^j)$ and $\exists \mathbf a \in \boldsymbol{\mathcal A}$ such that $  \mathbf P(\mathbf s'|\mathbf s, \mathbf a) > 0$.
\end{definition}

\begin{definition}[Proper Dependence Graph]
A graph $\mathcal G=\langle \mathcal V, \mathcal E\rangle$ is proper if there exists a transition kernel $\mathbf P$ that admits it as a dependence graph; i.e. if $(\mathbf s_1, i)\to(\mathbf s_1', j)$ and $(\mathbf s_2, i)\to(\mathbf s_2', i)$ are two edges of $\mathcal G$ such that $ s_1^j= s_2^j$, then $(\mathbf s_2, i) \to (\mathbf s_2', j)$ is also an edge of $\mathcal G$. 
Every proper dependence graph has a unique collection of agent-dependence sets $\{\text{Pa}_{\mathcal G}^i(s^i)\}_{i}^N$ for each state $\mathbf s$.
\end{definition}


Fix a realized state trajectory $\mathbf s_{0:\infty}=(\mathbf s_0,\mathbf s_1,\dots)$, the $k$-hop parent sets of agent $i$ at timestep $t$ are defined recursively as:
\[
\text{Pa}^i(s_t^i,0,\mathbf s_{0:t})=\{i\},
\qquad
\text{Pa}^i(s_t^i,k,\mathbf s_{0:t}))=\bigcup_{m\in\text{Pa}_i(s_t^i,k-1,\mathbf s_{0:t})}\text{Pa}^m(s_{t-k}^m),
\quad k\ge 1.
\] 
The \emph{first meeting time} between agent $j$ starting from timestep $t$ to the agent $i$ in a trajectory $\mathbf s_{0:\infty}$ in a future timestep $t'>t$ is defined as
\begin{equation}\label{def:meetingtime}
T_{ji}(s^j_t, \mathbf s_{0:\infty}, \mathcal G):=\min\{t'\ge t:j\in\text{Pa}^i(s_{t'}^i,t'-t,\mathbf s_{0:t'})\}-t,
\end{equation}
with $T_{ij}(\cdot)=\infty$ if no such $t'$ exists. We also write $T_{ji}( s)$ if the dependence graph and the trajectory are obvious from the context. The meeting time can be interpreted as the rate at which interactions between agents propagate across the graph.

\textbf{Information-Theoretic Notation.} We use capital letters to denote random variables corresponding to the same entities; for example, $S^i$ and $A^i$ represent the random variables associated with the (sub)state $s^i$ and action $a^i$ of the agent $i$. We also use $H(\cdot)$ to denote entropy and $I(\cdot; \cdot)$ to denote mutual information.

\section{Motivation: Scalarize or not scalarize}




Existing works in cooperative MARL either use a global reward, for example, in CTDE, or a local reward, such as in independent learning (see Section \ref{sec:relate}). However, these two setups represent two extreme learning paradigms that are only suitable on a case-by-case basis. 
To illustrate, we provide an example of these two learning modes in the LBF environments \citep{christianos2020shared}. 
We demonstrate two scenarios that exhibit contrasting learning patterns on the same backbone MAPPO algorithm. The results are presented in Figure (\ref{fig:motivation}).
We observe the following;

\begin{wrapfigure}[17]{r}{0.49\textwidth}
\centering
\includegraphics[width=0.49\textwidth]{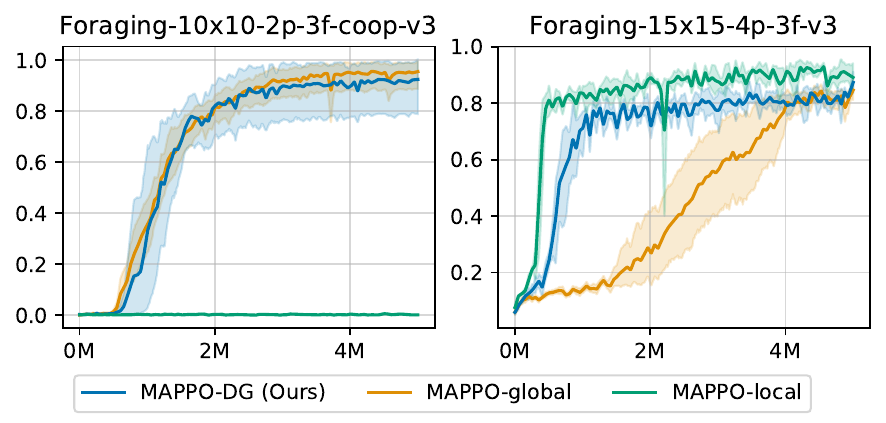}
\caption{Reward dilemma in cooperative MARL; global reward enhances cooperation but introduces credit assignment problem, while local reward can induce suboptimal policies in the environments that require cooperation. Our method can enable faster training, as in local rewards, while avoiding the miscoordination pitfall.} \label{fig:motivation}
\end{wrapfigure}
On the one hand, learning with a global reward signal can ensure the cooperation of agents. Among the two scenarios, the global reward settings both converge. However, the variance of the estimated gradient in environments with a high number of agents can be significantly high; this additional variance is the cost of the credit assignment from scalarization. 
In fact, any global-reward algorithms face a theoretical sample-efficiency limit can scale with the number of agents \cite{bagnell2005local}, exacerbating the credit assignment problems \citep{foerster2018counterfactual, kuba2021settling}.
Existing works alleviate this problem either through adaptive trust region \citep{sun2022trust}, sequential updates \citep{zhong2024heterogeneous}, or conservative update mechanisms \citep{wu2021coordinated}.

On the other hand, learning with separate reward signals can converge faster, especially in environments with a large number of agents, since local reward learning does not suffer from the credit assignment problem.
However, the cost of this learning approach is the potential of suboptimal convergence \cite{devlin2014potential, le2025toward}.
Resolving this problem generally requires careful reward engineering and experimentations \citep{mao2020reward}.

We summarize our observation as the following \textit{reward dilemma} for multi-agent RL
\begin{textbox}
    The credit assignment problem introduced by global reward scalarization imposes additional variance that scales with the number of agents, potentially slowing down the learning process. However, local reward learning can cause severe consequences due to miscoordination.
\end{textbox}

From this observation, we see the limitations of both the global scalarization and local reward learning. 
Restricting learning to only one of these settings can be insufficient for a truly general learning framework. 
This motivates the development of a framework that supports more flexible learning paradigms, allowing a smooth transition between these two extremes. 
In this work, we take a step in this direction.
Our approach can offer greater flexibility to reward settings and facilitate more effective optimization in MARL, while mitigating the credit assignment issues of global rewards and the miscoordination in local rewards. 

\section{Method}\label{sec:method}

\subsection{Dependence Graph Policy Gradient}
\label{sec:dgpg}
We now derive how the dependence structure introduced in the previous section enables a more efficient computation of the policy gradient. Our starting point is the fully cooperative objective,
\(J(\boldsymbol{\pi}) = \sum_{i=1}^N J^i(\boldsymbol{\pi})\).
The policy gradient with respect to agent \(j\)'s parameters can then be written as $\nabla_{\pi_j} J(\boldsymbol{\pi}) = \sum_{i=1}^N \nabla_{\pi_j} J^i(\boldsymbol{\pi})$.
Using the multi-agent policy gradient theorem \citep{foerster2018counterfactual, kuba2021settling}, we have
\begin{equation}
    \nabla_{\pi_j} J^i(\boldsymbol{\pi})
    = 
    \frac{1}{(1-\gamma)}\mathbb E_{\rho_{\boldsymbol{\pi}},\, \boldsymbol{\pi}}
    \Bigl[
        \nabla \log \pi_j(a^j|s^j)\,
        Q^i_{\boldsymbol{\pi}}(\mathbf s, \mathbf a)
    \Bigr],
    \label{eq:cross-grad}
\end{equation}
where \( Q^i_{\boldsymbol{\pi}}(\mathbf s, \mathbf a) \) denotes the joint action-value function of agent \(i\) under the joint policy \( \boldsymbol{\pi} \).
We include the derivation of \eqref{eq:cross-grad} in appendix \ref{sec:multi_agent_pg} for completeness.
In local reward settings, the cross-agent gradients are completely ignored; i.e. $\nabla_{\pi_i} J^j=0, \forall j \neq i$.

We can further simplify the policy gradient in (\ref{eq:cross-grad}) to include only the necessary states that affect agent $i$'s rewards.
As illustrated in Figure \ref{fig:mdp}, 
if agent $k$ takes an action at timestep $t$, it cannot directly affect the state of agent $p$ at timestep $t+1$
as there is no edge connecting them.
Therefore, the cross-gradient of agent $k$ to $p$ is zero in this case.
However, the state of agent $q$ at time $t+2$ can be influenced by agent $k$, but only through an intermediate state, which is an indirect effect that "propagates" from agent $k$ through paths on the graph.
We formalize this propagation effect in the following result.
\begin{proposition}\label{prop:global}
    Fix a joint policy $\boldsymbol{\pi}$. 
    Let $i, j \in  [N]$.
    The policy gradient $\nabla_{\pi_j}J^i(\boldsymbol{\pi})$ is given by
    \begin{equation}
        \frac{1}{1-\gamma}\mathbb E_{\mathbf s_0, \mathbf a_0\sim\rho_{\boldsymbol{\pi}}, \boldsymbol{\pi}}\left[\nabla \log \pi_j(a_0^j|s_0^j)\mathbb E_{\tau}\bigg [ \sum_{k \geq T_{ji}( s^j_0, \tau)}^\infty\gamma^{k}r^i(s^i_{k}, a^i_{k})\bigg| \mathbf s_0, \mathbf a_0\bigg]\right], \label{eq:dependency_graph_pg}
    \end{equation}
    where $T_{ji}(s^j_0, \tau)$ is the first timestep first meeting time as defined in \eqref{def:meetingtime}.
\end{proposition}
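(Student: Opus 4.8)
The plan is to start from the cross-agent identity \eqref{eq:cross-grad}, unfold the action-value as the discounted return $Q^i_{\boldsymbol\pi}(\mathbf s_0,\mathbf a_0)=\mathbb E_\tau[\sum_{k\ge 0}\gamma^k r^i(s^i_k,a^i_k)\mid \mathbf s_0,\mathbf a_0]$ (re-indexing the current time as $0$), and split the sum at the first-passage time $t'$. Since rewards are bounded and $\gamma<1$, all sums converge absolutely and the interchange of sum and expectation is justified, so it suffices to prove that the truncated ``pre-$t'$'' part contributes nothing:
\begin{equation*}
\mathbb E_{\rho_{\boldsymbol\pi},\boldsymbol\pi}\!\left[\nabla\log\pi_j(a^j_0\mid s^j_0)\,\mathbb E_\tau\!\Big[\sum_{0\le k<t'}\gamma^k r^i(s^i_k,a^i_k)\,\Big|\,\mathbf s_0,\mathbf a_0\Big]\right]=0 .
\end{equation*}
I would prove this term by term in $k$, using the score-function (baseline) identity $\mathbb E_{a^j_0\sim\pi_j}[\nabla\log\pi_j(a^j_0\mid s^j_0)]=0$. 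Conditioning on $\mathbf s_0$ and on the other agents' initial actions $a^{-j}_0$, the factorized sampling $\boldsymbol\pi(\cdot\mid\mathbf s_0)=\prod_m\pi_m(\cdot\mid s^m_0)$ lets me write each $k$-term as $\mathbb E_{a^{-j}_0}\big[\mathbb E_{a^j_0}[\nabla\log\pi_j(a^j_0\mid s^j_0)\,g_k]\big]$, where $g_k(\mathbf s_0,\mathbf a_0):=\mathbb E_\tau[r^i(s^i_k,a^i_k)\,\mathbf 1\{k<t'\}\mid \mathbf s_0,\mathbf a_0]$. Thus everything reduces to showing that $g_k$ does \emph{not} depend on $a^j_0$ (given $\mathbf s_0,a^{-j}_0$); then the inner integral is $g_k\cdot 0=0$.

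The hard part will be establishing this conditional independence, because the event $\{k<t'\}$ is itself a function of the realized trajectory and hence of $a^j_0$, so one cannot naively treat the reward as ``unaffected.'' I would make it rigorous through a reachability-equals-influence induction. Using the persistence property (the self-loops $i\in\mathrm{Pa}^i_{\mathbf P}$), the first-passage characterization gives $\{k<t'\}=\{i\notin R_k\}$, where $R_k$ is the set of agents joined to $j$ by a length-$k$ path along the trajectory, obeying the recursion $R_{k+1}=\{m:\mathrm{Pa}^m_{\mathbf P}(s^m_k)\cap R_k\neq\emptyset\}$. The key structural point I would exploit is that, because $\mathrm{Pa}^m_{\mathbf P}(s^m_k)$ is determined by the current substate $s^m_k$, the set $R_{k+1}$ is measurable with respect to $R_k$ together with the substates of the currently \emph{unreachable} agents (reachable agents stay reachable by the self-loop and need not be inspected).

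Formally, I would let $\mathcal F_k$ be the $\sigma$-algebra (given $\mathbf s_0,a^{-j}_0$) generated by $\{R_\ell\}_{\ell\le k}$ and by $\{(s^m_\ell,a^m_\ell):\ell\le k,\ m\in\mathcal N\setminus R_\ell\}$, and prove by induction on $k$ that the conditional law of $\mathcal F_k$ is invariant to $a^j_0$. The base case $k=0$ is immediate since $R_0=\{j\}$ and the unreachable data is part of $(\mathbf s_0,a^{-j}_0)$. For the step, $R_{k+1}$ is $\mathcal F_k$-measurable by the recursion above, and for each $m\notin R_{k+1}$ one has $\mathrm{Pa}^m_{\mathbf P}(s^m_k)\subseteq \mathcal N\setminus R_k$, so the transition $s^m_{k+1}\sim P_m(\cdot\mid\{s^\ell_k,a^\ell_k:\ell\in\mathrm{Pa}^m_{\mathbf P}(s^m_k)\})$ and the subsequent action $a^m_{k+1}\sim\pi_m(\cdot\mid s^m_{k+1})$ are driven solely by $\mathcal F_k$-measurable inputs and fresh exogenous noise independent of $a^j_0$; hence $\mathcal F_{k+1}$ remains $a^j_0$-invariant. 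Since $i\notin R_k$ on $\{k<t'\}$, the product $r^i(s^i_k,a^i_k)\mathbf 1\{k<t'\}$ is $\mathcal F_k$-measurable, so $g_k=\mathbb E[r^i(s^i_k,a^i_k)\mathbf 1\{k<t'\}\mid \mathbf s_0,a^{-j}_0]$ does not depend on $a^j_0$, which closes the argument. I expect the only remaining care to be bookkeeping: confirming the first-passage/persistence equivalence $\{k<t'\}=\{i\notin R_k\}$ and justifying the interchange of the infinite sum with the expectations via uniform boundedness and $\gamma<1$.
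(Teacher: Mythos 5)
Your proof is correct and follows essentially the same route as the paper's: both reduce the claim to the zero-mean score identity $\mathbb E_{a_0^j}[\nabla\log\pi_j(a_0^j\mid s_0^j)]=0$ applied to the pre-$t'$ rewards, after showing those rewards are conditionally independent of $a_0^j$ given $(\mathbf s_0,\mathbf a_0^{-j})$. The only difference is presentational: the paper establishes this independence by explicitly marginalizing the transition-kernel products and tracking the backward $k$-hop parent sets $\text{Pa}^i(s_t^i,k,\tau)$ (worked out for $t=1,2$ and then iterated), whereas you track the forward reachable set $R_k$ from $j$ and run a filtration-invariance induction over the unreachable agents --- a dual, and arguably cleaner, formalization of the same combinatorial object, using the same self-loop property to get monotonicity and hence the first-passage form $\sum_{k\ge t'}$.
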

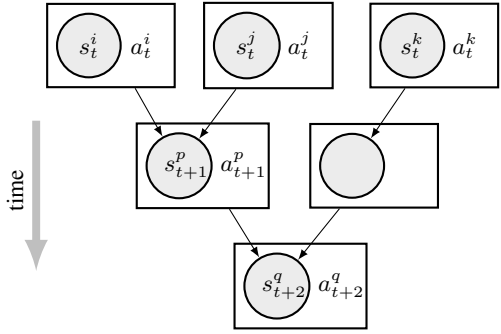
\begin{wrapfigure}[21]{r}{0.49\textwidth}
\begin{tikzpicture}[auto,node distance=2mm,>=latex,font=\small,
  text width=4mm,align=center,]
\tikzstyle{round}=[thick,draw=black,circle,fill=gray!15]
\tikzstyle{square}=[thick,draw=black,rectangle, minimum size=1mm]



\node[round] (sti) at (-0.3,0) {$s_t^i$};
\node (ati) at (0.4,0) {$a_t^i$};

\node[square, fit=(sti)(ati), name=ti,minimum height=1cm] {};

\node[round] (sti1) at (1.8,0) {$s_t^j$};
\node (ati1) at (2.5,0) {$a_t^j$};

\node[square, fit=(sti1)(ati1), name=ti1,minimum height=1cm] {};

\node[round] (sti3) at (4.0,0) {$s_t^k$};
\node (ati3) at (4.7,0) {$a_t^k$};

\node[square, fit=(sti3)(ati3), name=ti3,minimum height=1cm] {};


\node[round] (st1i) at (0.9,-1.6) {$s_{t+1}^p$};
\node (at1i) at (1.65,-1.6) {$a_{t+1}^p$};

\node[square, fit=(st1i)(at1i), name=t1i1,minimum height=1cm] {};

\node[round, minimum size=8.5mm] (st1i1) at (3.2,-1.6) { };
\node (at1i1) at (3.9,-1.6) {};

\node[square, fit=(st1i1)(at1i1), name=t1i2,minimum height=1cm] {};


\node[round] (st2i) at (2.2,-3.2) {$s_{t+2}^q$};
\node (at2i) at (2.95,-3.2) {$a_{t+2}^q$};

\node[square, fit=(st2i)(at2i), name=t2i1,minimum height=1cm] {};

\draw[->] (ti) -- (st1i);
\draw[->] (ti1) -- (st1i);
\draw[->] (t1i1) -- (st2i);

\draw[->] (ti3) -- (st1i1);
\draw[->] (t1i2) -- (st2i);


  \draw[->, draw=gray!50, line width=3pt] (-1,-1) -- (-1,-3.0) node[midway, left=8pt, rotate=90, anchor=center] {time};

\end{tikzpicture}
\caption{An example of an MDP with decomposed state structure. Agent $k$ (top right) cannot influence agent $q$ at timestep $t+1$; any effect can only occur from timestep $t+2$ onward (bottom). Since agent $q$ lies in agent $k$'s blind spot, it can be excluded from the gradient computation at $t+1$. In this example, the meeting time $T_{kq}(s^k_t)=2$.}\label{fig:mdp}
\end{wrapfigure}
We call the policy gradient that is estimated through Proposition \ref{prop:global} the \textit{dependency-graph} policy gradient, and denote it by $\nabla_{\pi_j}J(\boldsymbol{\pi}; \mathcal G)$. 
Compared to the traditional policy gradient, equation \eqref{eq:dependency_graph_pg} differs by the cumulative reward that only starts after the timestep $T_{ji}$. As a result, if each agent operates relatively independently, i.e., a sparse dependence graph, then the cross-agent gradient can become exponentially small based on the meeting times. 
On the contrary, if all agents' MDPs are tightly coupled, e.g., a fully connected dependence graph, then \eqref{eq:dependency_graph_pg} returns to the standard policy gradient.

Our dependency-graph Policy Gradient operates on an orthogonal dimension and is complementary to the Policy gradient with Networked MDPs in \citet{qu2019exploiting, qu2020scalable}. 
More specifically, compared to Lemma 2 in \cite{qu2020scalable}, where the joint value functions are approximated through a small subset of $k$-hop parents, with the approximation error reducing exponentially fast with $k$, i.e., the value function \textit{representation} dimension. Our policy gradient operates on the \textit{time} dimension where the value functions can be estimated after the meeting timestep, without changing the gradient.
Moreover, compared with the policy gradient theorems in \cite{syed2026structured} (Theorem 3.7) and \cite{jing2024distributed} (Lemma 2), our result generalizes the pruning scheme.
Specifically, their theorems only prune agents that are unreachable from a given agent, corresponding to the special case $T_{ji}=\infty$ in our framework. As a result, their pruning mechanism is effective only when the graphs have multiple connected components.

\subsection{Sample Complexity Analysis} \label{sec:sample_complexity}
In this section, we demonstrate sample complexity improvements of using the dependence graph through Policy Gradient (PG) algorithms. We let $\boldsymbol{\theta}=(\theta_1, \theta_2, \dots, \theta_N)$ to denote the joint policy parameters, where each $\theta_i$ parameterises agent $i$'s policy. 
In the policy gradient framework, we estimate gradients from $m$ sampled trajectories over a truncated horizon $H$.
The standard estimator and its dependence-graph counterpart are given by:
\begin{equation}
\hat{\nabla}^{m}_{\theta_j}J^i(\boldsymbol{\theta}) = \frac{1}{m}\sum_{l=1}^m\sum_{t=0}^{H}\gamma^t \nabla_{\theta_j}\log \pi_j(a^i_{t, l}|s^i_{t, l}) \sum_{k=t}^H\gamma^k r_{k, l}^i \label{eq:gpomdp}
\end{equation}
\begin{equation}\hat{\nabla}^m_{\theta_j}J^i(\boldsymbol{\theta}, \mathcal G) = \frac{1}{m}\sum_{l=1}^m\sum_{t=0}^{H}\gamma^t \nabla_{\theta_j}\log \pi_j(a^j_{t, l}|s^j_{t, l}) \sum_{k=t + T_{ji}^H(s_{t, l}^j, \mathbf s_{0:H, l})}^H\gamma^k r_{k, l}^i\label{eq:grad_graph}
\end{equation}
With truncated dependence-graph gradient estimates, cross-agent gradients are dropped beyond horizon $H$ \footnote{We define $T_{ji}^H$ in \eqref{eq:truncated_graph_gradient_estimate}.}. 
Following gradient estimation, each agent $j$ updates its parameters as
$\theta_j^{n} = \theta_j^{n-1} + \eta \sum_i^N\hat{\nabla}_{\theta_j}J^i(\cdot), \forall j \in [N]$, where $\eta$ is the step size and $\theta^n_j$ is the agents' parameters at the iteration $n$. This process repeats for a total of $T$ iterations.  Also, we note that the update in \eqref{eq:gpomdp} corresponds to the GPOMDP estimator in the single-agent literature \citep{baxter2001infinite}. 

Compared to the standard PG, the dependence-graph estimator \eqref{eq:grad_graph} prunes timesteps prior to $T_{ji}$, yielding a reduction in effective sample weight by a factor of approximately $\gamma^{T_{ji}}$. To formalise this gain, we define
\[\Gamma_j^i := \sup_{k, \boldsymbol{\pi}} \mathbb E[\gamma^{T_{ji}(s_k^j)}]\leq 1 \quad \quad \Gamma_j:=\sum_i^N \Gamma_j^i \leq N \quad \quad \Gamma :=\sum_j^N \Gamma_j \leq N^2, \]
The following result shows that exploiting the dependence structure leads directly to improved sample complexity.
\begin{theorem}\label{theom:sample_complexity}
    Suppose that each agent's policy satisfies the standard smoothness and Lipschitz condition (assumption \ref{assm:e-ls}). Then, for a given $\epsilon > 0$, by choosing appropriate parameters of $H, T, m, \eta$ (details in appendix \ref{sec:main_proof}), we need a total number of samples of 
    \begin{equation*}
        \begin{cases}
        {O}\left( \frac{NR_{\max}^4\Gamma\max_{j}\Gamma_j \log (N/\epsilon)}{(1-\gamma)^6\epsilon^4\log(1/\gamma) }\right) & \text{ using dependence graph PG \eqref{eq:grad_graph}}\\
        {O}\left( \frac{N^3R_{\max}^4 \max_j\Gamma_j \log (N/\epsilon)}{(1-\gamma)^6\epsilon^4\log(1/\gamma) }\right) & \text{ using standard PG \eqref{eq:gpomdp}}
    \end{cases}
    \end{equation*}
    to guarantee convergence to an $\epsilon$ stationary point $\mathbb E[\|\nabla_{\boldsymbol{\theta}^U}J(\boldsymbol{\theta}^U)\|^2]\leq O(\epsilon^2)$, where $\boldsymbol{\theta}^U$ is uniformly sampled from $\{\boldsymbol{\theta}^0, \dots, \boldsymbol{\theta}^{T-1}\}$. 
    Furthermore, if the gradient domination condition holds for each agent (assumption \ref{assm:gradient_domination}), then the converged point is an $\epsilon$ equilibrium.
    \[ \mathbb E[J(\boldsymbol{\boldsymbol{\theta}}^U)]\geq  \sup_{\pi_j}\mathbb E [J(\boldsymbol{\pi}_{\boldsymbol{\theta}^U}^{-j}, \pi^j)]-O(\epsilon), \quad \forall j\in [N].\]
\end{theorem}
\begin{remark}
    The term $\max_j\Gamma_j$ appears in both complexity bounds because the smoothness of the objective function $J(\boldsymbol{\theta})$ (the hardness of the optimisation problem) scales according to the dependence structure of the MDP. The standard PG estimates incur a complexity that scales with $N^2$ as it cannot exploit the local reward structure.  
    On the other hand, the dependence-graph estimator improves this scale with $\Gamma$ instead, which can be substantially smaller when inter-agent dependencies are sparse.
\end{remark}
We compare our result with the state-of-the-art sample complexity of single-agent PG methods \citep{yuan2022general} in the following two special cases.
\begin{remark}\label{rem:sparse_graph}
    As a first special case, when the dependence graph is totally disconnected (no edges exist between vertices of different agents, i.e. no cross-agents dependency or interactions), then our MDP formulation behaves as if a vector of parallel (and different) single-agent MDPs, where each "sample" in this vector MDP comprises $N$ "subsample" tuples. In this case, we have that $\Gamma=O(N)$ and $\max_j \Gamma_j=O(1)$, and so the sample complexity of the dependence graph scales with $N^2$ vector samples, which becomes $N^3$ if we consider subsamples. 
    As a result, if we consider a subsample in the vector samples equivalent to a sample in single-agent problems, then the sample complexity of our approach matches (up to a logarithmic factor) the sample complexity of solving $N$ independent single-agent RL problems with the PG method to an equivalent $\epsilon/\sqrt{N}$ precision.
\end{remark}
\begin{remark} \label{rem:dense_graph}
    In the second special case, when the dependence graph is fully connected, we have that $\Gamma=O(N^2)$ and $\max_j \Gamma_j=O(N)$, and so the sample complexity of the dependence graph PG scales with $N^4$. In this case, the sample complexity of our method matches the sample complexity of solving a single agent RL in the joint state action space (upto a constant factor due to the mismatch of the smoothness and Lipchitzness coefficients in the joint and local parameter space) using PG method with the new $R^{\text{new}}_{\max}=NR_{\max}$.
\end{remark}
Remark \ref{rem:sparse_graph} and \ref{rem:dense_graph} demonstrate that our method provides a smooth transition between the two extreme settings of global and local learning depending on the sparsity of the agent interactions.

\subsection{Dependence Graph Policy Gradient with approximate Graphs}

The formulation of the dependence graph $\mathcal G$ introduced earlier assumes knowledge of the underlying transition dynamics $\mathbf P$.  
In practice, however, the transition model is often unknown, necessitating the use of approximate or heuristic graphs.
In fact, prior works either assume that the graph is given \citep{qu2020scalable}, or use predefined structures \citep{ma2024efficient, jing2024distributed}.
In this section, we consider how an approximate graph $\mathcal G'$ can be used in place of the true graph. We first establish a condition under which the estimate policy gradient remains accurate.  

\begin{lemma}[Gradient Error under an Approximated Dependence Graph]\label{lem:graph-approx}
Let $\mathcal G$ be the true dependence graph induced by the transition kernel $\mathbf P$, and let $\mathcal G'$ be a proper approximate graph for which there exists a transition kernel $\mathbf P'_{\mathcal G'}$ that admits $\mathcal G'$ as its dependence graph and satisfies
\begin{equation}
    \sup_{\mathbf s,\mathbf a}\| \mathbf P(\cdot\mid\mathbf s,\mathbf a) - \mathbf P{'}_{\mathcal G'}(\cdot\mid\mathbf s,\mathbf a)\|_1 \le \varepsilon. \label{eq:lem_approx}
\end{equation}
Let  $\nabla_{\pi_j}J(\boldsymbol\pi;\mathcal G')$ denote the dependency-graph policy gradient from the graph $\mathcal G'$. 
If rewards are bounded in $[0,R_{\max}]$ and
\(
B_j := \sup_{s^j,a^j}\|\nabla\log\pi_j(a^j\mid s^j)\|
\)
is finite, then
\[
\big\|
\nabla_{\pi_j}J^i(\boldsymbol\pi) - \nabla_{\pi_j}J^i(\boldsymbol\pi;\mathcal G')
\big\|
\le
O\left( B_j \,\frac{\gamma\,\varepsilon\,R_{\max}}{(1-\gamma)^2}\right).
\]
\end{lemma}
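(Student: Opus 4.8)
The plan is to reduce the claim to a model-mismatch bound between two Markov games that agree in every component except their transition kernels. First I would place both gradients on a common footing using Proposition~\ref{prop:global}. Since $\mathcal G'$ is by hypothesis the dependence graph admitted by $\mathbf P'_{\mathcal G'}$, Proposition~\ref{prop:global} applies verbatim to the auxiliary game with kernel $\mathbf P'_{\mathcal G'}$; hence $\nabla_{\pi_j}\mathcal J^i(\boldsymbol\pi;\mathcal G')$ is exactly the ordinary multi-agent policy gradient \eqref{eq:cross-grad} of that game, with $Q^i_{\boldsymbol\pi}$ and $\rho_{\boldsymbol\pi}$ evaluated under $\mathbf P'_{\mathcal G'}$, while $\nabla_{\pi_j}\mathcal J^i(\boldsymbol\pi)$ is the same expression under $\mathbf P$. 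The meeting-time truncation is thus only a reorganization of identical expectations, so the entire gap is attributable to the $\varepsilon$-perturbation of the kernel in \eqref{eq:lem_approx}, and the dependence graph drops out of the analysis.

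Writing $Q^i,\rho$ for the quantities under $\mathbf P$ and $\tilde Q^i,\tilde\rho$ for those under $\mathbf P'_{\mathcal G'}$, I would then split the difference by adding and subtracting $\mathbb E_{\rho}[\nabla\log\pi_j\,\tilde Q^i]$:
\[
\mathbb E_{\rho,\boldsymbol\pi}\big[\nabla\log\pi_j\,(Q^i-\tilde Q^i)\big]
\;+\;
\sum_{\mathbf s}\big(\rho(\mathbf s)-\tilde\rho(\mathbf s)\big)\,
\mathbb E_{\boldsymbol\pi}\big[\nabla\log\pi_j\,\tilde Q^i\big].
\]
The first term isolates the value-function error at a fixed visitation measure; the second isolates the shift in visitation at a fixed value function. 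In both terms $\|\nabla\log\pi_j\|\le B_j$, so $B_j$ factors out and I am left with two scalar estimates.

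For the value-function term I would run a Bellman contraction. Setting $\Delta:=\|Q^i-\tilde Q^i\|_\infty$ and subtracting the two Bellman equations, the shared reward $r^i$ cancels, leaving $\Delta\le\gamma\Delta+\gamma\,\varepsilon\,\|\tilde V^i\|_\infty$, where the $\varepsilon$ arises from bounding $(\mathbf P-\mathbf P'_{\mathcal G'})\cdot\tilde V^i$ by H\"older against \eqref{eq:lem_approx} and $\|\tilde V^i\|_\infty\le R_{\max}/(1-\gamma)$. Solving the recursion gives $\Delta\le \gamma\varepsilon R_{\max}/(1-\gamma)^2$, which supplies the $\gamma\varepsilon R_{\max}/(1-\gamma)^2$ scaling. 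For the visitation term I would bound $\|\rho-\tilde\rho\|_1\le\sum_t\gamma^t\|\mathbb P_t-\mathbb P'_t\|_1$, where $\mathbb P_t,\mathbb P'_t$ are the time-$t$ state marginals under $\mathbf P$ and $\mathbf P'_{\mathcal G'}$, and note by telescoping that the per-step $\ell_1$ mismatch grows at most linearly in $t$ (each step injects at most $\varepsilon$ while the shared dynamics are non-expansive on the simplex); with $\tilde Q^i$ again bounded by $R_{\max}/(1-\gamma)$, the $\gamma$-weighted sum is controlled. Combining the two estimates with the triangle inequality and collecting the $B_j,\gamma,\varepsilon,R_{\max}$ factors yields the stated $O(\cdot)$ bound.

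I expect the visitation-shift term to be the main obstacle. The value-error bound is a one-line contraction, whereas controlling $\|\rho-\tilde\rho\|_1$ requires propagating the per-step transition error through the entire discounted trajectory and verifying that its linear-in-$t$ growth is tamed by $\gamma$; this is also where the exact power of $(1-\gamma)$ is settled, since the overall horizon factor is fixed by the normalization convention for $\rho_{\boldsymbol\pi}$. A secondary point to check is the reduction itself: one must confirm that $\mathcal G'$ is proper and genuinely admitted by $\mathbf P'_{\mathcal G'}$, so that Proposition~\ref{prop:global} can be invoked without modification and the truncation provably leaves each game's gradient unchanged.
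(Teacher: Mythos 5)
Your reduction rests on a misreading of the quantity $\nabla_{\pi_j}\mathcal J^i(\boldsymbol\pi;\mathcal G')$, and that misreading is exactly where the proof breaks. You take it to be the ordinary policy gradient of an \emph{auxiliary game} whose kernel is $\mathbf P'_{\mathcal G'}$, so that ``the meeting-time truncation is only a reorganization of identical expectations'' and ``the dependence graph drops out of the analysis.'' But in the lemma the estimator applies the $\mathcal G'$-truncation to trajectories generated by the \emph{true} kernel $\mathbf P$: the outer expectation is still $\mathbb E_{\rho_{\boldsymbol\pi},\boldsymbol\pi}$ under $\mathbf P$, and only the indicator deciding which rewards to keep is computed from the wrong graph. (The paper flags this explicitly: $\mathbf P'_{\mathcal G'}$ exists only to certify that $\mathcal G'$ is sensible and ``is not used otherwise.'') Proposition~\ref{prop:global} guarantees that truncation is exact only when the graph matches the kernel generating the trajectories, so for $\mathbf P$-trajectories truncated by $\mathcal G'$ the graph does \emph{not} drop out --- quantifying the bias of that mismatch is the entire content of the lemma. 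The paper's proof therefore inserts the intermediate quantity $Q^{i,j}_{\mathbf P',\mathcal G'}$ (both dynamics and truncation from the auxiliary model, where Proposition~\ref{prop:global} restores the untruncated value and the simulation lemma handles $T_1$), and is then left with $T_2=\sup_{\mathbf s,\mathbf a}\bigl|Q^{i,j}_{\mathbf P,\mathcal G'}-Q^{i,j}_{\mathbf P',\mathcal G'}\bigr|$: the \emph{same} truncation rule evaluated under two kernels. This term is nontrivial precisely because the truncated reward $\gamma^{t'}r_{t'}$ is non-Markovian in the original state space; the paper augments the state with connectivity flags to make it Markovian before applying the simulation lemma. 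None of this machinery appears in your sketch, and your two terms (value error plus visitation shift) do not cover it.

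A secondary issue: even granting your reading, your accounting does not deliver the stated rate. Both gradients in the lemma are expectations under the same $\rho_{\boldsymbol\pi}$, so there is no visitation-shift term at all in the correct decomposition; the one you introduce, bounded via $\|\rho-\tilde\rho\|_1\le\sum_t\gamma^t\,t\,\varepsilon=O\!\left(\gamma\varepsilon/(1-\gamma)^2\right)$ and multiplied by $\|\tilde Q^i\|_\infty\le R_{\max}/(1-\gamma)$, contributes $O\!\left(B_j\gamma\varepsilon R_{\max}/(1-\gamma)^3\right)$, one power of $(1-\gamma)$ worse than the claim. So the step you yourself identify as the main obstacle would, if carried out, overshoot the bound, which is a further sign that the decomposition is not the intended one.
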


The subtlety of lemma \ref{lem:graph-approx} is the use of the approximated graph $\mathcal G'$ instead of $\mathcal G$ in the policy gradient while using samples from $\mathbf P$. The existence of the kernel $\mathbf P'_{\mathcal G'}$ is to only guarantee that $\mathcal G'$ represents something sensible but is not used otherwise. 
While Lemma \ref{lem:graph-approx} can be applied to any approximated graphs, including the heuristic ones, it also suggests a theoretical approach for estimating such a dependence graph through a model-based approach; by learning a dynamic model $\mathbf P'$, then the induced graph $\mathcal G'$ gives small-bias estimation of the gradient as long as the model prediction error is small. 
However, estimating the total variation in (\ref{eq:lem_approx}) of the world model can be challenging because it usually involves learning a high-dimensional, complex next-state distribution. 
Furthermore, it is unclear how to extract a dependence graph from a learned world model in practice.
We next consider this quantity in a slightly different form.

For a given proper graph $\mathcal G'$, we let $\mathbf P_{\mathcal G'}$ be the probability of next states conditional on the graph structure of $\mathcal G'$, i.e.
\[ P^i_{\mathcal G'}(s^i{'}| \mathbf s, \mathbf a) =   P^i\big(s^i{'} \big| \{s^k,  a^k :k\in \text{Pa}^i_{\mathcal G'}(\bar s^i)\}\big);\forall i\in [N].\]
When $\mathcal G'$ encompasses all edges in $\mathcal G$, then $\mathbf P_{\mathcal G'}$ recovers $\mathbf P$, otherwise, $\mathbf P_{\mathcal G'}$ is a marginal probability of $\mathbf P$ whose marginalization is edges present in $\mathcal G$ but not in $\mathcal G'$. 
By definition, $\mathbf P_{\mathcal G'}$ is a transition kernel that admits $\mathcal G'$ as its dependence graph (it is, in fact, the best approximated $\mathbf P'$ on $\mathcal G'$).
Pinsker's inequality gives us an upper bound
\begin{align}
    \mathbb E\big\| P^i - P^i_{\mathcal G'}\big\|_1\leq \sqrt{2} \sum_{j, n=1}^N \mathbb E\Big[ \sqrt{I\big(S^i{'}; (S^{j}, A^j)\big)} 
    \Big| \text{Pa}^i_{n}(s^i)\bigg]
    \label{eq:mutual_info_bound}
\end{align}
where $\{\text{Pa}_n^i\}_{n=1}^N$ is a collection of non-decreasing agent-dependence sets for each $s^i$ such that $\text{Pa}_1^i(s^i) = \text{Pa}^i_{\mathbf P}(s^i) \cap\text{Pa}_{\mathcal G'}^i(s^i)$, $\text{Pa}_N^i(s^i) = \text{Pa}^i_{\mathbf P}(s^i) $, and $|\text{Pa}^i_{m+1}(s^i) \setminus \text{Pa}^i_{m}(s^i)|\leq 1$.
Furthermore, with some abuse of notations, we denote $\text{Pa}^i(\cdot)$ the states and actions of agents in $\text{Pa}^i(\cdot)$.
Full derivation and other details of this result are left to Appendix \ref{app:proof}.
Intuitively, if agent $i$’s states and actions have little impact on agent $j$'s future states, then the mutual information is low, and thus agent $j$ can be considered not to be connected to agent $i$ in the dependence graph. 
As a result, we can construct $\mathcal G'$ so that the upper bound in \eqref{eq:mutual_info_bound} is small, leading to small error gradients as in Lemma \ref{lem:graph-approx}.

\subsection{Approximating Dependence Graph via reverse world models}\label{sec:reverse_model}
We conclude our method by proposing a simple approach for learning dependence graphs that circumvents learning a forward world model.
To make the estimates in \eqref{eq:mutual_info_bound} practical, we introduce two simplifications to ease the mutual information estimation problem.

First, the mutual information in (\ref{eq:mutual_info_bound}) can be written as $I(S^{i}{'}; S^j, A^j) = I(S^{i}{'}; S^j) + I(S^{i}{'}; A^j | S^j)$.
As the MI between states $S^i{'}$ and $S^j$ can be difficult to estimate, we will ignore the former term and assume that the next states only depend on the actions of other agents. Second, we approximate the states and actions in $\text{Pa}_n^i(s^i)$ defined in \eqref{eq:mutual_info_bound} with $(s^i, a^i)$. This approximation is accurate when the dependence graph has a low degree of vertices, such as 1 or 2. 
Overall, with these simplifications, our objective derived from (\ref{eq:mutual_info_bound}) becomes
\begin{align}
    L_i(\boldsymbol{\pi}) := \mathbb E\sum_j^N I(S^i{'}; A^j| S^i, A^i, S^j)
    = \mathbb E\sum_j^N \left[ H(A^j| S^j) - H(A^j|S^i, S^j, S^i{'}) \right].
    \label{eq:objective}
\end{align}
To estimate the entropies in \eqref{eq:objective}, we adopt a variational lower bound based on a learned reverse dynamics model. Specifically, we train (i) an action prediction model $q_{\phi}$ to estimate $\hat H_{q_{\phi}}(A^j \mid S^j)$ and (ii) a multi-agent reverse world model $q_{\psi}$ that predicts agent $j$’s action conditioned on the state information of both agents $i$ and $j$.
To determine whether a directed edge from agent $j$ to agent $i$ exists at state $\mathbf{s}$, we compute the discrepancy between the two entropy terms. 
If conditioning on agent $i$ significantly reduces the uncertainty of $A^j$, then we consider there is an edge between them.
To construct $\mathcal{G}^{'}$, we apply a soft thresholding as a heuristic rule to define an edge from $j$ to $i$ whenever
\[\frac{\hat H_{q_{\psi}}(A^j|s^i, s^j, s^{i}{'})}{\hat H_{q_\phi}(A^j|s^j)} < c,\]
{where $c$ is a hyperparameter in the range of $[0, 1]$}.
The value of $c$ can be used to trade off the density of the approximated graph $\mathcal{G}^{'}$ for accuracy.
At the extreme value $c=0$, our method reverts to the standard local reward setup.
In our experiments, we manually select $c$ at 0.9 for all experiments.

To improve the effectiveness of our approach, we additionally learn an encoder and incorporate  the coordination graph into the GAE estimation, details and motivation designs are left to appendix \ref{sec:practical}.

\section{Experiments}

\begin{wrapfigure}[21]{r}{0.5\textwidth}
\includegraphics[width=0.495\textwidth]{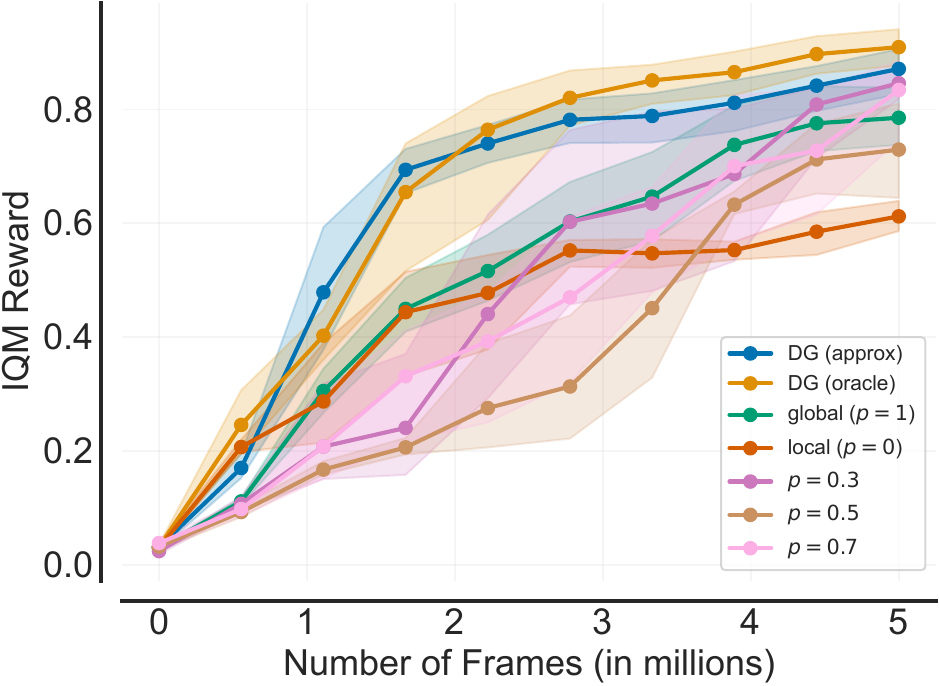}
\caption{We test our methods (DG) with various random dependence graph structures, where each agent has a probability of $p$ to form an edge with any other agent at any state. Global and local reward settings can be considered as special cases of random graphs with $p=1$ and $p=0$, respectively.}\label{fig:lbf_er_graph}
\end{wrapfigure}
We evaluate our methods on three MARL benchmarks, namely MPE, LBF, and SMAClite, which we modified to support local reward settings. Due to space constraints, we defer the environment details to Appendix \ref{appx:env}. Since our method is compatible with any on-policy gradient method, 
we adopt our backbone algorithms on both MAPPO \cite{yu2022surprising} and IPPO \cite{de2020independent}. Our source code is publicly available \footnote{\url{https://github.com/giangbang/dependence-graph-epymarl}}.

\textbf{Star-sread MPE.}
\begin{figure*}[]
\centering
\includegraphics[width=0.99\textwidth]{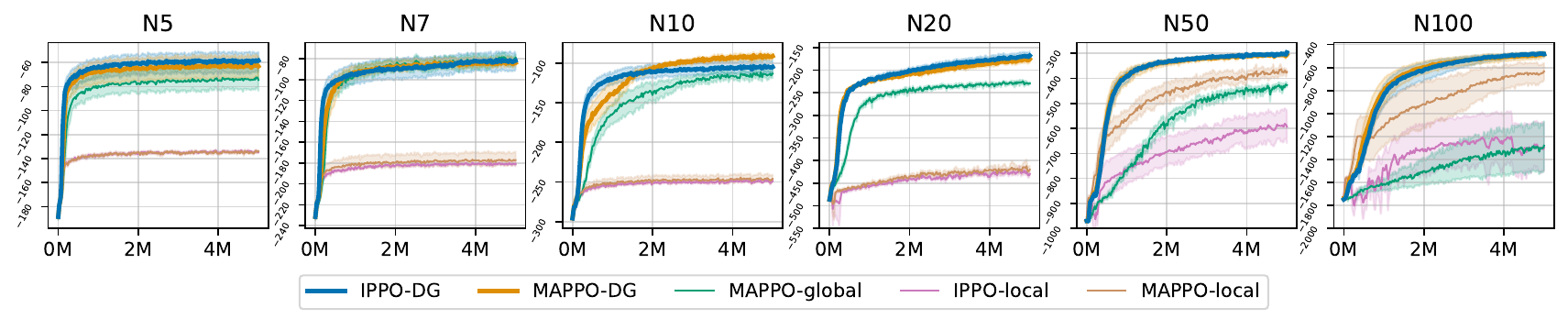}
\caption{Results on the Spread MPE with known dependence graphs (DGs) and varying numbers of agents $N$. Shaded regions denote 95\% stratified bootstrap confidence intervals. Global rewards perform well for small $N$ but scale poorly as $N$ increases, whereas local rewards scale better. 
Our method combines the advantages of both, maintaining strong performance across scales.
}
\label{fig:mpe}
\end{figure*}
In the first part of the experiments, we evaluate our method under a known Dependence Graph (DG) setting. We design a simple MPE environment with a star-structured reward graph that is sufficiently sparse while still requiring coordination among agents. Figure \ref{fig:mpe} demonstrates the effectiveness of our approach, with IPPO-DG and MAPPO-DG achieving competitive performance compared to both local- and global-reward baselines. Moreover, the performance gap between the DG-based methods and the global-reward baseline widens as the number of agents increases, which supports our theoretical analysis.

\textbf{LBF with learned graphs.}
\begin{figure*}[]
\centering
\includegraphics[width=\textwidth]{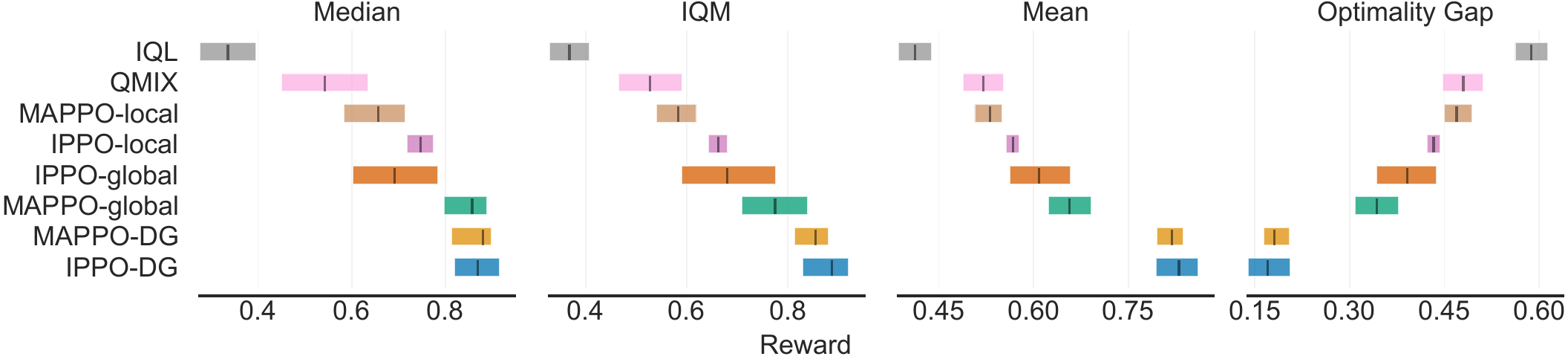}
\caption{Results on the LBF benchmark on 6 selected scenarios following the evaluation protocol in \citet{agarwal2021deep}. Methods with (learned) dependence graphs consistently outperform other baselines from both local and global reward settings.}
\label{fig:lbf_rliable}
\end{figure*}
Figure \ref{fig:lbf_rliable} presents the results on the LBF benchmark. Our method improves performance for both IPPO and MAPPO, achieving the best results among all baselines. Specifically, using the Dependence Graph, IPPO increases its interquartile mean (IQM) reward to 0.89, compared to 0.66 with local rewards and 0.68 with global rewards. MAPPO shows a similar trend: IQM improves to 0.85 with the Dependence Graph, compared to 0.58 under local rewards and 0.77 under global rewards. 
Dependence Graph yields slightly larger gains for IPPO than for MAPPO due to its stronger performance under the local reward setup.
Full experimental results are provided in Appendix \ref{app:detail_exp}.

We conduct additional studies on MAPPO with different graph structures in LBF. 
As baselines, we choose random graph structures in which each agent has a probability $p$ of connecting to the other agents. We tested with different values of $p$ corresponding from sparse to dense graph connections. Furthermore, we also construct a (heuristic) oracle dependence graph setup based on domain knowledge of the LBF transitions. 
Specifically, we consider two agents connected if their L1 distances are less than or equal to 2. If there are multiple such agents, we only keep the two closest ones. Figure \ref{fig:lbf_er_graph} demonstrates the effectiveness of our graph approximation, which performs almost comparably with the oracle graphs.

\section{Limitations and Conclusions}
\textbf{Limitations.}
In environments where the oracle and heuristic graphs are available, such as in MPE and LBF, our methods outperform both local and global approaches. Our method with learned graph works almost as the oracle graph in LBF, but does not improve on SMAClite. This reflects the main limitations of our approach:
\begin{itemize}
    \item Our entire work assumes that the graph depends on local states ($\text{Pa}(s)$ in section \ref{sec:settings}). In particular, the learned graph section was designed specifically that the graph can be inferred from the local states alone. What will happen when this assumption does not hold remains open. Unfortunately, this assumption can be difficult to hold in general (we discuss this in appendix \ref{app:assumption}). Modeling such non-local information can require some form of communication or central mediators. However, given that our graph formulation is already more general than previous works \citep{bagnell2005local, qu2019exploiting, qu2020scalable, jing2024distributed}, we consider it reasonable to lewave it to future work.
    \item Our learned graph procedure makes aggressive simplification on the state dependence. In particular, it may fail when transition dynamics depend more on agents' states than on their actions. Nevertheless, to the best of our knowledge, our work is also the first to attempt to learn the dependence graph from data. Furthermore, the reverse model is only able to learn information about the action, and we believe that capturing the state information requires some form of forward world model. Again, we leave this direction to future work.
\end{itemize}

\textbf{Conclusion.} In this work, we present a MARL framework that leverages dependence graphs 
that capture dependencies between agents and their local states 
to filter irrelevant reward signals and mitigate credit assignment. 
We also propose a practical method that remains effective under noisy approximated graphs. 
Experiments show that our approach performs competitively across both local and global reward settings, offering a robust balance between the two approaches.



\bibliographystyle{achemso}
\bibliography{reference}


\appendix

\clearpage

\appendix
\section*{Appendix}\label{appx:a}

\section{Related Work}\label{sec:relate}

\textbf{Reward for Cooperation. }
The behaviors of RL agents are defined by the reward structure, a principle known as the {reward hypothesis} \citep{sutton2018reinforcement}.
Effective reward functions are notoriously difficult to design, and this remains true in cooperative MARL \citep{pan2022effects}.
Traditionally, agents are trained using separate, agent-specific reward vectors under the {independent learning} framework, where each agent is treated as an individual RL problem \citep{tan1993multi, de2020independent}.
This approach offers benefits such as simplicity and scalability \citep{oroojlooy2023review}, but also suffers from issues like non-stationarity and miscoordination \citep{hernandez2017survey}.
The degree of cooperation is also sensitive to reward shapings \citep{leibo2017multi, tampuu2017multiagent}.

In contrast, using a shared reward function directly enforces cooperation as in fully cooperative MARL settings \citep{yuan2023survey}.
This setup is often trained under the Centralized Training with Decentralized Execution (CTDE) paradigm \citep{foerster2018counterfactual}, which further enhances coordination through centralized training.
However, shared rewards introduce the credit assignment problem, making it difficult to scale fully cooperative learning to environments with many agents \citep{bagnell2005local, wang2020breaking}.
As a result, local reward learning is more suitable for large-scale MARL problems \citep{ zheng2018magent}.

Recently, a number of approaches attempt to explore a middle ground between the two aforementioned reward settings. For instance, \citet{le2025toward} apply tools from multi-objective optimization to identify Pareto optimal policies. However, their method treats the problem as a generic optimization task and does not explicitly model individual agent contributions within the sequential decision-making process. 
\citet{wang2022individual} optimize auxiliary local rewards alongside the sparse global team reward. However, ultimately this approach still solves the two different reward formulations and relies on additional heuristic design of local rewards to enable effective learning.

\textbf{Credit assignment. } Credit assignment problem is the problem of attributing the contribution of individual agents from the team's rewards \citep{chang2003all}. 
Two main approaches to solving credit assignment problems in the literature include (i) decomposing the global value function, and (ii) marginalizing the contribution of one agent over the others \citep{gronauer2022multi}. 
The first approach, sometimes referred to as implicit credit assignment  \citep{zhou2020learning}, has gained popularity from the works of VDN \citep{sunehag2017value} and QMIX \citep{rashid2020monotonic}. 
This approach is characterized by using individual utility functions and a mixing procedure to combine them into a total joint value function. To facilitate efficient decentralized execution, most methods follow the Individual-Global-Max (IGM) principle \citep{hong2022rethinking}. 
IGM ensures local greedy agents align with globally optimal actions, but in practice, its limited decomposability can hinder convergence \citep{wang2020towards, rashid2020weighted}. 
Coordination Graph \cite{guestrin2001multiagent} provides another approach that factorizes the joint value function using a graph structure, enabling a higher-order value factorizations \citep{kok2006collaborative}. However, this method introduces significant computational overhead, which in practice often restricts implementations to pairwise interactions \cite{bohmer2020deep}. 
The second approach, with representative works such as Counterfactual learning \citep{foerster2018counterfactual}, mainly uses baselines \citep{weaver2013optimal, wu2018variance} to reduce the variance of Policy Gradient estimation \citep{kuba2021settling}.
Shapley Q-value \citep{wang2020shapley} is a similar approach that assigns credit based on agents' average marginal contribution over all coalition formations.
A common characteristic of these methods is that they start from the global reward and distribute credit in a top-down manner.
In contrast, our work avoids the credit assignment problem by directly using local rewards in a bottom-up approach. 

\textbf{Networked Multi-Agent MDPs. }
Theoretically, our formulation is related to the works that model agent interactions as a networked graph in the Multi-Agent MDP framework \citep{bagnell2005local, qu2019exploiting, qu2020scalable}. 
This formulation enables more fine-grained modeling of local interactions, which is useful for analyzing agents' contributions. 
Most works in this direction rely on the exponential decay property of value representations \citep{qu2020scalable} to approximate the value function using a small subset of agents, possibly with communications \citep{ma2024efficient}, while our work explores the time dimension of value estimation along the dependency graph paths.
Recent works \citep{jing2024distributed, syed2026structured} further show that the policy gradient depends only on the reachable subset of agents, which corresponds to the infinite meeting-time case in our framework.
Another related direction is explored in \citep{lin2021multi}, which generalizes the setting to stochastic graphs. While our work assumes deterministic graphs, the dependence paths on the graphs evolve with the states visited over time and can therefore be regarded as effectively stochastic due to the stochasticity of future trajectories.


\section{Further details omitted in the main text}
\subsection{Multi-agent Policy Gradient}\label{sec:multi_agent_pg}
The extension of the policy gradient theorem to multi-agent learning is introduced and proved in \citep{foerster2018counterfactual}, which we reproduce here for completeness. 
From the single-agent policy gradient theorem, we have
\begin{align*}
    \nabla_{{\pi_i}} J(\boldsymbol{\pi}) &= \mathbb E [\nabla_{\pi_i} \log \boldsymbol{\pi}(\boldsymbol{a}|\mathbf s)Q(\mathbf s, \boldsymbol{a})]\\
    &= \mathbb E \left[\nabla_{\pi_i} \log\prod \pi_i(a^i|s^i)  Q(\mathbf s, \boldsymbol{a})\right]\\
    &=\mathbb E \left[\sum_i^N\nabla \log \pi_i(a^i|s^i)  Q(\mathbf s, \boldsymbol{a})\right]
\end{align*}
where the second equality is based on the independent policy assumption. With weight sharing, the global gradient equals the sum of the gradients of individual agents. In case that we want to write the gradient of one agent to another, as in Eq. \eqref{eq:cross-grad}, we can set $r=r^i$; the global reward as the reward of agent $i$, and using the multi-agent policy gradient theorem as above.

\subsection{Discussion and limitations}

\textbf{Compare with the fully cooperative setting.}
While our approach requires the additional local-reward information, which may not be available for all environments. 
It is common that the reward from the scalar reward team can be attributed to a specific agent. For example, in the LBF benchmark, the reward corresponds to agents participating in the food foraging. Even when the reward cannot be attributed explicitly, we can always distribute it evenly among the team, which essentially reduces our setting to a fully cooperative one. Additionally, scalarizing the rewards is a one-way operator, where any vector-valued reward can be transformed into a scalar team reward, but the converse is not possible. Furthermore, even when the individual reward signals are available, existing methods do not provide principled approaches to leverage this reward vector to mitigate the credit assignment problems. In contrast, our work offers a theory-grounded framework for incorporating individual reward information directly into the optimization of the team objective.

\textbf{Limitations.}
Our approach to graph approximation depends on the assumption that the neighboring agents can be inferred from the local observation, which technically is difficult to satisfy in practice (see Appendix \ref{app:assumption} for a discussion). Furthermore, the effect of ignoring the state mutual information in Section \ref{sec:reverse_model} is unknown, and we expect it to fail in environments where the transition probabilities depend heavily on agents' states instead of actions.
\begin{wrapfigure}[15]{r}{0.35\textwidth}
\includegraphics[width=0.9\linewidth]{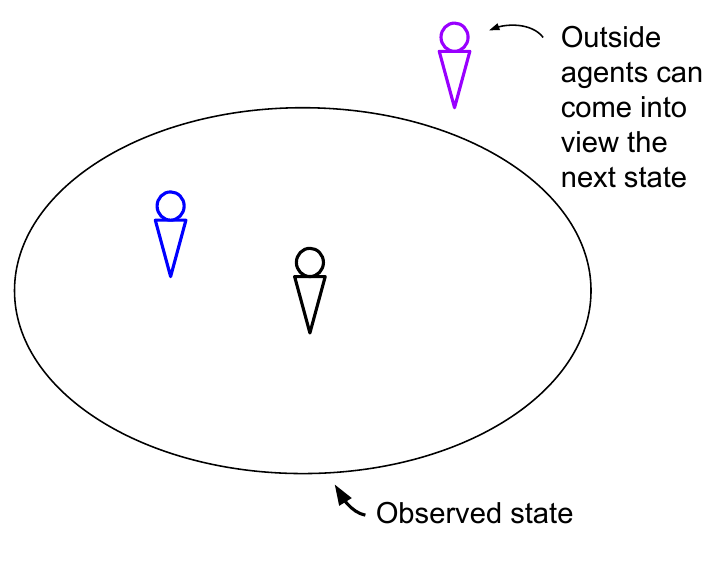} 
\caption{A simplified example, the purple agent can go into the observed states of the black agent and cannot be captured by a dependency graph based on the current substates of the black agent.}
\label{fig:dec_assumption}
\end{wrapfigure}
 
\subsection{Factorization of states in practice}\label{app:assumption}
Our theoretical ground for the graph approximation procedure is based on the assumption that an agent can infer their neighbor agents from observed states, as introduced in Section \ref{sec:settings}. This assumption naturally allows for the graph approximation idea based on local world model learning. 
In essence, the decomposition graph requires that current substates contain enough information to determine which agents it can interact with that can influence its next states.
Strictly speaking, this requirement can be difficult to meet in practice. We demonstrate this argument in an illustrative example in this section.
More specifically, we consider a simplified environment in Figure \ref{fig:dec_assumption}. 
In this example, the states of each agent are the field of vision where it can observe which other agents are adjacent in their vicinity (the oval circle). Next states of one agent not only depend on the nearby agents, but can also depend on the outside agents that can jump into its vision. A decomposition graph that is built on this observation state alone can not fully capture such "outsider" influences. However, we can consider that their overall effect can be negligible if the events of outside agents' interference happen rarely. 

\subsection{Practical Implementation}\label{sec:practical}
\begin{wrapfigure}[15]{r}{0.5\textwidth}
\centering
\includegraphics[width=0.49\textwidth]{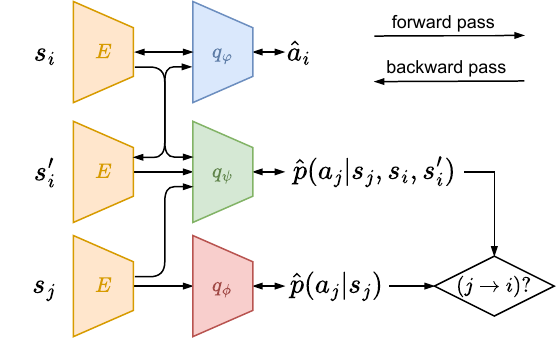}
\caption{Dependence graph approximation via reverse world models.}\label{fig:method}
\end{wrapfigure}
To make our method more effective in practice, two additional challenges must be addressed. First, the approximated graph on the original state space may contain many spurious edges. 
For example, agents may partially observe other agents' local states, even when it does not have a direct impact on their future states; such observations can allow for more accurate prediction of other agents' actions, resulting in many redundant edges.
Second, the approximated entropy difference in \eqref{eq:objective} is neither a lower nor an upper bound of the mutual information, as discussed in \citet{mcallester2020formal}, which can lead to noisy graph approximations. 

To mitigate the first issue, we adopt a learned latent state that filters out irrelevant information. Specifically, we train an encoder $E$ that maps the original states to a latent space that captures the information relevant only to the current agent by using a single-agent reverse world model $q_{\varphi}$. This single-agent reverse world model guides the encoder to retain only the information that is necessary for the current agent while ignoring the extraneous information about others. Our overall model approximation scheme with reverse world models is illustrated in Figure \ref{fig:method}.

To address the second issue, we devise an estimate strategy that combines multiple meeting timesteps $t'$. In particular, Proposition \ref{prop:global} shows that the true meeting timestep $t'$ provides the tightest estimate; however, any other estimate using a timestep earlier than $t'$ will also have the same expected gradient. As a result, we compute an exponentially weighted average of the dependence-graph gradients over all meeting timesteps earlier than the approximated timestep $t'$. This aggregation scheme is analogous to the GAE scheme, which results in a novel GAE-like estimate that incorporates the dependence graphs. The overall procedure is presented in Algorithm \ref{alg:gae}.

\subsection{Multi-Agent GAE-$\lambda$ with dependence graph}
\label{app:gae}
The key idea behind GAE is to average over all the $n$-step return estimations, which balances the high variance of Monte Carlo sampling and the high bias of the short-horizon $n$-step returns. 
We employ the same ideas as GAE for the meeting timestep $t'$, where we do not exclusively rely on a single estimate of this quantity. 
Similar to GAE, we start from the definition of TD error, defined as $\delta_t=r_t+\gamma V^i(s_{t+1})-V^i(s_t)$, where $V^i(\cdot)$ are value estimates of the agent $i$'s policy. The multi-step advantage estimates from $\delta$, combined with the dependence graphs at each timestep, are then defined as follows, for a given trajectory $\tau$,
\begin{align*}
    \hat{A}_t^{(1)}&=\delta_t\\
    \hat{A}_t^{(2)}&={1}(j\in \text {Pa}^i({s_t^i, 1, \tau}))\delta_t + \gamma \delta_{t+1}\\
    &\vdots\\
    \hat{A}_t^{(k+1)}&=
    \sum_{l=0}^{k-1}\gamma^l{1}(j\in \text{Pa}^i (s^i_{t+l}, l, \tau))\delta_{t+l} + \gamma^{k}\delta_{t+k},
\end{align*}
where $\text{Pa}^i (s_{t}, t, \tau)$ is the $t$-hop parents of $s_t^i$, defined in \eqref{eq:k_hop_def}.
Each of the $\hat{A}_t^{(k)}$ involves the sum of $k$ TD error terms.
Furthermore, all $\hat{A}_t^{(k)}$ are equivalent to advantage estimation with meeting times $k$ upto $t'>k$, with $t'$ the predicted first timestep that $i$ and $j$ are connected by a path.
This can be effective for mitigating the noisy predictions of the dependence graphs inferred by the world models.
Similar to GAE, we take the exponentially weighted average of all these terms
\begin{flalign*}
    \hat{A}_t^{\text{GAE}}&=(1-\lambda) \left( \hat{A}_t^{(1)} +\lambda \hat{A}_t^{(2)} + \lambda^2 \hat{A}_t^{(3)} + \dots \right)\\
    &=\begin{multlined}[t]
        (1-\lambda) \Big( \delta_t +\lambda \big( 1 (j\in \text {Pa}^{i} ({\mathbf s^i_{t}}, t, \tau))\delta_t + \gamma \delta_{t+1}\big) \\
        +\lambda^2 \big( 1 (j\in \text {Pa}^{i} ({\mathbf s^i_{t}}, t, \tau))\delta_{t} + 
        \gamma 1 (j\in \text {Pa}^{i} ({\mathbf s^i_{t+1}}, t+1, \tau))\delta_{t+1} + \gamma^2 \delta_{t+2}\big)
        +\dots \Big)
    \end{multlined}\\
    &= \begin{multlined}[t]
        (1-\lambda)\Big(\delta_t\big(1 + 1\big(j \in \text {Pa}^{i}(\mathbf s_{t}^i, t, \tau)\big)(\lambda + \lambda^2+\dots)\big) \\
        +\gamma\delta_{t+1}\big(\lambda +  1(j \in \text {Pa}^{i}({\mathbf s_{t+1}^i, t+1, \tau}))(\lambda^2 + \lambda^3+\dots)\big) +\dots\Big)
    \end{multlined}\\
    &= \begin{multlined}[t]
        \lambda(1-\lambda)\Big(\delta_t  1(j \in \text {Pa}^{i}( s_{t}^i, t, \tau))(1 + \lambda+\lambda^2+\dots) \\
        +\gamma\delta_{t+1} 1\big(j \in \text {Pa}^{i}(s_{t+1}^i, t+1, \tau)\big)\big(\lambda + \lambda^2+\dots\big) +\dots\Big) \\ + (1-\lambda)\big(\delta_t + \gamma\lambda\delta_{t+1}+\dots\big)
    \end{multlined}\\ 
    &=\sum_{l=0}^\infty \gamma^l\lambda^{l+1}\delta_{t+l} 1\big(j\in\text {Pa}^{i} (s_{t+l}, t+l, \tau)\big) + (1-\lambda)\sum_{l=0}^\infty (\gamma\lambda)^l\delta_{t+l}\\
    &= \sum_{l=0}^\infty (\gamma \lambda)^l\delta_{t+l}\Big( \lambda1\big(j\in\text {Pa}^{i} (s_{t+l}, t+l, \tau)\big) + 1 -\lambda \Big).
\end{flalign*}
Intuitively, when two agents are not connected by a path, we do not discard the contribution of the other agent’s gradient entirely; instead, we downweight it by a factor of $1-\lambda$. 
Finally, to find the $t$-hop parent set, we perform successive matrix multiplications of the adjacency matrices to the current timestep $t$ as detailed in Algorithm \ref{alg:gae}. 
\begin{algorithm}[]
\caption{Policy Gradient with Multi-Agent GAE-$\lambda$ and Dependence Graph}
\label{alg:gae}
\begin{algorithmic}[1]
\INPUT Trajectory $\tau=\{(\mathbf s_t, r^i_t)\}_{t=0}^{T-1}$, value estimates $\{V^i(\mathbf s_t)\}_{t=0}^{T}$,
discount factor $\gamma$, GAE parameter $\lambda$, adjacency matrix $\{\text{A}^t\}_{t=0}^{T-1}$, 
action scores $\{\nabla \log(\pi_j(a^j_t|s^j_t))\}_{t=0}^{T-1}$
\OUTPUT Gradient $\hat \nabla_{\pi_j}\mathcal J^i(\boldsymbol{\pi}; \mathcal G)$, Advantage $\{A^i_t\}_{t=0}^{T-1}$

\STATE Initialize $A_t \leftarrow 0$ for all $t = 0, \dots, T-1$

\FOR{$t_0 = T-1$ \textbf{down to} $0$}
\STATE $\text{gae} \leftarrow 0$
\STATE $\text{H} \leftarrow \text{diag}(\mathbf 1)$
\FOR{$t = t_0$ \textbf{to} $T-1$}
    \STATE $\delta^i_t \leftarrow r_t^i + \gamma V^i(\mathbf s_{t+1}) - V^i(\mathbf s_t)$
    \STATE $\text{H} \leftarrow \text{H} \cdot \text{A}^t$
    \STATE $\text{gae} \leftarrow \delta^i_t (\gamma \lambda)^{t-t_0} \big(\lambda 1(\text{H}_{ji}>0)  + 1 - \lambda \big) +  \text{gae}$
\ENDFOR
\STATE $A_{t_0}^i \leftarrow \text{gae}$
\ENDFOR

\RETURN $\frac{1}{T}\sum_{t=0}^{T-1}\gamma^t A_t^i\nabla \log \pi_j(a_t^j|s_t^j) $, $\;\;\{A^i_t\}_{t=0}^{T-1}$
\end{algorithmic}
\end{algorithm}

\section{Proofs for section \ref{sec:dgpg}}
\label{app:proof}

In this section, we use the following definition of path on the dependence graphs.
\begin{definition}[Path in the Dependence Graph]\label{def:path}
Let $\mathcal G = \langle \mathcal V, \mathcal E\rangle$ be the induced Dependence Graph of $\mathcal M$.
A \emph{path} from vertex $(\mathbf s, i)$ to vertex $(\mathbf s', j)$
is a sequence of vertices
\[
(\mathbf s_0, i_0) \;\to\; (\mathbf s_{1}, i_1) \;\to\; \cdots \;\to\; (\mathbf s_{L}, i_L),
\]
such that
\begin{enumerate}
    \item $i_0 = i$, $i_L = j$, $\mathbf s_0=\mathbf s$, $\mathbf s_L = \mathbf s'$,
    \item for each $\ell=0,\dots,L-1$, the directed edge 
    $(\mathbf s_{\ell}, i_\ell) \to (\mathbf s_{\ell+1}, i_{\ell+1})$
    belongs to $\mathcal E$.
\end{enumerate}
\end{definition}
We call $L$ the \emph{length} of the path. 
If there is a path from $(\mathbf s_t, i)$ to $(\mathbf s_{t'}, j)$, then there is a path from $(\mathbf s_t, i)$ to any of $(\mathbf s_{{t''}}, j)$ for all ${t''} \geq t'$ since an agent influences its next states. A path is defined on a realizable sequence of states that we call a (state) trajectory.
\begin{proof}[Proof of Proposition \ref{prop:global}]
    From the policy gradient theorem, we unroll each timestep
    \begin{align}
        (1-\gamma)\nabla_{\pi_j}J^i(\boldsymbol{\pi}) &= \mathbb E_{\mathbf s_0 \sim \rho_{\boldsymbol{\pi}},\mathbf a_0\sim  \boldsymbol{\pi}}[\nabla \log \pi_j(a_0^j|s_0^j)Q^i(\mathbf s_0, \mathbf a_0)]\notag \\
        &=  \mathbb E_{\rho, \boldsymbol{\pi}}\bigg[\nabla \log \pi_j(a_0^j|s_0^j)\Big[\mathbb E[r^i_0(s_0^i, a_0^i)] + \gamma \mathbb E_{\mathbf s_1, \boldsymbol{\pi}}\big[Q^i(\mathbf s_1, \mathbf a_1)|\mathbf s_0, \mathbf a_0\big] \Big]\bigg] \notag\\
        &= \mathbb E_{\rho, \boldsymbol{\pi}}\Big[\nabla \log \pi_j(a_0^j|s_0^j)\gamma \mathbb E_{\mathbf s_1, \boldsymbol{\pi}}\big[Q^i(\mathbf s_1, \mathbf a_1)|\mathbf s_0, \mathbf a_0\big]\Big] \notag\\
        &=  \mathbb E_{\rho, \boldsymbol{\pi}}\bigg[\nabla \log \pi_j(a_0^j|s_0^j)\Big[\gamma \mathbb E_{s^i_1, a^i_1}[r^i_1 (s^i_1, a^i_1)|\mathbf s_0, \mathbf a_0] + \gamma^2 \mathbb E_{\mathbf s_2, \mathbf a_2}\big[Q^i(\mathbf s_2, \mathbf a_2)|\mathbf s_0, \mathbf a_0 \big]
        \Big]\bigg], \label{eq:the_second_term}
    \end{align}
where in the third equality, we use $\mathbb E_{a_0^j}[\nabla \log \pi_j(a_0^j|s_0^j)r^i_0(s_0^i, a_0^i)]=0$ when $i \neq j$ (since we are assuming local reward structure), which we can write as
\[(1-\gamma)\nabla_{\pi_j}J^i(\boldsymbol{\pi}) = \mathbb E\Big[\nabla \log \pi_j(a_0^j|s_0^j) \mathbb E_{\tau} \big[  r^i_0 1(j\in \{i\}) + \gamma Q^i(\mathbf s_1, \mathbf a_1) | \mathbf s_0, \mathbf a_0\big]  \Big].\]
Next, observe the first term in \eqref{eq:the_second_term} that 
\begin{align*}
    \mathbb E_{s^i_1, a^i_1}[r^i_1(s^i_1, a^i_1)|\mathbf s_0, \mathbf a_0] &= \sum_{s^i_1 \in \mathcal S^i}\sum_{ a^i_1 \in \mathcal A^i}P^i\big(s^i_1|\{s^k_0, a^k_0:k\in \text{Pa}^i(s_0^i)\}\big)\pi_i(a_1^i|s^i_1)r^i_1(s_1^i, a_1^i)
\end{align*}
So given $\mathbf s_0$, if $j \not \in \text{Pa}^i(s_0^i)$, then 
\begin{align*}
    &\mathbb E_{\mathbf a_0\sim \boldsymbol{\pi}}\Big[\nabla \log \pi_j(a_0^j|s_0^j) \mathbb E_{s^i_1, a^i_1}[r_1^i(s^i_1, a^i_1)|\mathbf a_0] \;\Big |\:\mathbf s_0\Big] \\
    &\qquad = \mathbb E_{\mathbf a_0}\Big[\nabla \log \pi_j(a_0^j|s_0^j) \mathbb E_{s^i_1, a^i_1}[r^i_1(s^i_1, a^i_1)|\mathbf a_0^{-j}] \;\Big |\:\mathbf s_0\Big]\\
    &\qquad =0.
\end{align*}
Note that we can interpret the property $j \not \in \text{Pa}^i(s_0^i)$ as: given a segment of state trajectory $\boldsymbol{s}_{0:1}= (\mathbf s_0, \mathbf s_1)$ for an arbitrary $\mathbf s_1 \in \boldsymbol{\mathcal S}$, then the path (defined on this segment of trajectory) $(\mathbf s_0, j) \rightarrow (\mathbf s_1, i)$ is not a path in the dependency graph. If this property holds, then the policy gradient of agent $j$ with respect to the reward of agent $i$ at the current timestep $t=1$ is 0. In other words,
\[(1-\gamma)\nabla_{\pi_j}J^i(\boldsymbol{\pi}) = \mathbb E\Big[\nabla \log \pi_j(a_0^j|s_0^j) \mathbb E_{\tau} \big[r^i_0 1(j\in \{i\})+  \gamma r^i_1 1(j\in \text{Pa}^i(s^i_0)) + \gamma^2Q^i(\mathbf s_2, \mathbf a_2) | \mathbf s_0, \mathbf a_0\big]  \Big].\]

We next  consider the second term in \eqref{eq:the_second_term}
\begin{align*}
     &\mathbb E_{\rho, \boldsymbol{\pi}}\bigg[\nabla \log \pi_j(a_0^j|s_0^j)\Big[\gamma^2 \mathbb E_{\mathbf s_2, \mathbf a_2}\big[Q^i(\mathbf s_2, \mathbf a_2)|\mathbf s_0, \mathbf a_0 \big]
        \Big]\bigg]\\
        &\qquad = \mathbb E_{\rho, \boldsymbol{\pi}}\bigg[\nabla \log \pi_j(a_0^j|s_0^j)\Big[\gamma^2 \mathbb E_{s_2^i, a_2^i}[r^i_2(s_2^i, a_2^i)|\mathbf s_0, \mathbf a_0] +  \gamma^3 \mathbb E_{\mathbf s_3, \mathbf a_3}\big[Q^i(\mathbf s_3, \mathbf a_3)|\mathbf s_0, \mathbf a_0 \big]
        \Big]\bigg].
\end{align*}
Again, we write the reward term in its explicit form
\begin{align*}
    &\mathbb E_{s^i_2, a^i_2}[r^i(s^i_2, a^i_2)|\mathbf s_0, \mathbf a_0] \\
    &= \begin{multlined}[t]
        \sum_{\mathbf s_1 \in \boldsymbol{\mathcal S}} \sum_{\mathbf a_1\in \boldsymbol{\mathcal A}} \sum_{s^i_2 \in \mathcal S^i}\sum_{ a^i_2 \in \mathcal A^i}  
    \bigg[r^i_2(s_2^i, a_2^i) P^i\big(s^i_2|\{s^k_1, a^k_1:k\in \text{Pa}^i(s_1^i)\}\big)\pi_i(a_2^i|s^i_2)\\
    \prod_{n}^N \Big[ P^n \big( s^n_1 |  \{s^m_0, a^m_0:m\in \text{Pa}^n(s_0^n)\} \big)\pi_n(a_1^n|s_1^n) \Big]\bigg]
    \end{multlined}\\
    &= \begin{multlined}[t]
        \sum_{\mathbf s_1 \in \boldsymbol{\mathcal S}} \sum_{\mathbf a_1\in \boldsymbol{\mathcal A}} \sum_{s^i_2 \in \mathcal S^i}\sum_{ a^i_2 \in \mathcal A^i}  
    \Bigg[r^i_2(s_2^i, a_2^i) P^i\big(s^i_2|\{s^k_1, a^k_1:k\in \text{Pa}^i(s_1^i)\}\big)\pi_i(a_2^i|s^i_2)\\
    \prod_{k\in \text{Pa}^i(s_1^i)}^{} \Big[ P^k \big( s^k_1 |  \{s^m_0, a^m_0:m\in \text{Pa}^k(s_0^k)\} \big)\pi_k(a_1^k|s_1^k) \Big]\Bigg],
    \end{multlined}
\end{align*}
the second equality is due to the marginal expectation of agents that are not in $\text{Pa}^i(s_1^i)$.
So given $\mathbf s_0$ and $\mathbf s_1$, if $j \not \in \cup_{k\in \text{Pa}^i(s_1^i)}\text{Pa}^k(s_0^k)$, then
\begin{align*}
    &\mathbb E_{\mathbf a_0\sim \boldsymbol{\pi}}\Big[\nabla \log \pi_j(a_0^j|s_0^j) \mathbb E_{s^i_2, a^i_2}[r(s^i_2, a^i_2)|\mathbf a_0, \mathbf s_1] \;\Big |\:\mathbf s_0\Big] 
    =0,
\end{align*}
we can interpret the property $j \not \in \cup_{k\in \text{Pa}^i(s_1^i)}\text{Pa}^k(s_0^k)$ as: given a segment of trajectory $\boldsymbol{s}_{0:2}= (\mathbf s_0, \mathbf s_1, \mathbf s_2)$ for an arbitrary $\mathbf s_2 \in \boldsymbol{\mathcal S}$, then all the paths (defined on this segment of trajectory) $(\mathbf s_0, j) \rightarrow (\mathbf s_1, \cdot) \rightarrow (\mathbf s_2, i)$ is not a path in the dependency graph $\mathcal G$. If this property holds, then the policy gradient of agent $j$ with respect to the reward of agent $i$ at the current timestep $t=2$ is 0. In other words,
\begin{align*}
    \begin{multlined}
        (1-\gamma)\nabla_{\pi_j}J^i(\boldsymbol{\pi}) = \mathbb E\bigg[\nabla \log \pi_j(a_0^j|s_0^j) \mathbb E_{\tau} \Big[ r^i_0 1(j\in \{i\})+ \gamma r^i_1 1\big(j\in \text{Pa}^i(s^i_0)\big) \\ + \gamma^2 r_2^i 1\Big(j \in \cup_{m\in \text{Pa}^i(s_1^i)} \text{Pa}^m(s_0^m)\Big) 
        + \gamma^3Q^i(\mathbf s_3, \mathbf a_3) | \mathbf s_0, \mathbf a_0\Big]  \bigg].
    \end{multlined}
\end{align*}
Also, notice that $\cup_{k\in \text{Pa}^i(s_1^i)}\text{Pa}^k(s_0^k)$ is the set of 2-hop parents of $s_2^i$ on the realised trajectory $\tau$.

In general, with trajectories of states $\tau =(\mathbf s_0, \mathbf s_1, \mathbf s_2, \dots)$, by repeating the above steps, we can write the policy gradient as
\begin{align*}
    (1-\gamma)\nabla_{\pi_j}J^i(\boldsymbol{\pi}) &= \mathbb E_{\mathbf s_0\sim \rho_{\boldsymbol{\pi}}, \mathbf a_0 \sim \boldsymbol{\pi}}\bigg[\nabla \log \pi_j(a_0^j|s_0^j) \mathbb E_{\tau} \Big[ \sum_{t=0}^\infty\gamma^t r^i_t 1\big(j \in \text{Pa}^i(s_t^i, t, \tau) \big) \big| \boldsymbol{s}_0, \boldsymbol{a}_0 \Big] \bigg],
\end{align*}
where we define $ \text{Pa}^i(s_t^i, k, \tau)$ the $k$-hop parents of $s_t^i$ on the trajectory $\tau$, that is
\begin{align}
    \text{Pa}^i(s_t^i, k, \tau)&:= \text{Pa}^i(s^i_{t}, k-1, \tau)\cup\bigg(\bigcup_{m\in \text{Pa}^i(s^i_{t}, k-1, \tau)}\text{Pa}^m\big(s^m_{t-k}\big)\bigg) \notag\\
    &\; = \bigcup_{m\in \text{Pa}^i(s^i_{t}, k-1, \tau)}\text{Pa}^m\big(s^m_{t-k}\big), \label{eq:k_hop_def}
\end{align}
where the last equality is because $i\in \text{Pa}^i(s^i), \forall s^i\in \mathcal S^i$. As the base case, we define the 0-hop parents of any state as $\text{Pa}^i(s_t^i, 0, \tau) = \{i\}$. 

Finally, we can see that the sets $\{\text{Pa}^i(s_t^i, t, \tau)\}_t$ is a non-decreasing sequence, because \eqref{eq:k_hop_def} can also be written as
\begin{equation}
    \text{Pa}^i(s_t^i, t, \tau) = \bigcup_{n\in \text{Pa}^i(s^i_{t-1})}\text{Pa}^{n}(s^n_{t-1}, t-1, \tau), \label{eq:k_hop_alternative_def}
\end{equation}
and that $i \in \text{Pa}^i(s^i_{t-1})$.
As a result, if there exists a timestep $t' > 0$ where $j\in \text{Pa}^i(s_{t'}^i, t', \tau)$, then all subsequent timespteps $t{''} \geq t'$, $j\in \text{Pa}^i(s_{t''}^i, t'', \tau)$. We can let $t'$ be the smallest of such timesteps (in which case $t'=T_{ji}(\tau, s_0^j)$), then
\[(1-\gamma)\nabla_{\pi_j}J^i(\boldsymbol{\pi}) =\mathbb E_{\rho_{\boldsymbol{\pi}}, \boldsymbol{\pi}}\left[\nabla \log \pi_j(a_0^j|s_0^j)\mathbb E_{\tau}\bigg [ \sum_{k \geq t'}\gamma^{k}r^i(s^i_{k}, a^i_{k})\bigg| \mathbf s_0, \mathbf a_0\bigg]\right].\]
If there does not exist such a timestep, then we can simply let $\nabla_{\pi_j}J^i(\boldsymbol{\pi})= 0$ on this trajectory, which is equivalent to $t'=\infty$.
\end{proof}

\begin{proof} [Proof of Lemma \ref{lem:graph-approx}]
For fixed agents $i$ and \(j\), and a join policy $\boldsymbol{\pi}$, define the dependency-truncated value functions as
\begin{align}
    \label{eq:Q_P_G_def}
Q^{i, j}_{\mathbf P,\mathcal G}(\mathbf s,\mathbf a)
&:= 
\mathbb E_{\tau\sim \mathbf P,\ \boldsymbol\pi}
\Biggl[
\sum_{t=t'}^{\infty} \gamma^{t} r_t^i
\ \Big|\ \mathbf s_0=\mathbf s,\ \mathbf a_0=\mathbf a
\Biggr]\\
&\;= \mathbb E_{\tau\sim\mathbf  P,\ \boldsymbol\pi}
\Biggl[
\sum_{t=0}^{\infty} \gamma^{t} r_t^i 1(j\in \text{Pa}^i(s^i_t, t, \tau))
\ \Big|\ \mathbf s_0=\mathbf s,\ \mathbf a_0=\mathbf a
\Biggr], \label{eq:eq7}
\end{align}
where we use \(t':=T_{ji}(s_0^j, \tau)\) as a shorthand for the first timestep on trajectory \(\tau\) at which there exists a directed path in graph \(\mathcal G\) from the originating vertex \((\mathbf s_0,j)\) to \((\mathbf s_{t'}, i)\); if no such time exists we take the Q value to be zero, and $\text{Pa}^i(s_t^i, k, \tau)$ denotes the set of $k$-hop parents in the path to $(\mathbf s_t, i)$, i.e. $\text{Pa}^i(s^i_t, k, \tau)=\text{Pa}^i(s^i_{t}, k-1, \tau)\cup\big(\bigcup_{m\in \text{Pa}^i(s^i_{t}, k-1, \tau)}\text{Pa}^m(s^m_{t-k})\big)$ and $\text{Pa}^i(s^i_{t}, 0, \tau) = \{i\}$. 
Then, 
\[
\begin{aligned}
&\nabla_{\pi_j}\mathcal J^i(\boldsymbol\pi;\mathcal G) - \nabla_{\pi_j}\mathcal J^i(\boldsymbol\pi;\mathcal G') \\
&\qquad=
\mathbb E_{\rho_{\boldsymbol\pi},\boldsymbol\pi}\big[
\nabla\log\pi_j(a^j\mid s^j)\,\big(
Q^{i,j}_{\mathbf P,\mathcal G}(\mathbf s,\mathbf a) - Q^{i, j}_{\mathbf P,\mathcal G'}(\mathbf s,\mathbf a)
\big)
\big]\\
&\qquad= 
\mathbb E_{\rho_{\boldsymbol\pi},\boldsymbol\pi}\big[
\nabla\log\pi_j(a^j\mid s^j)\,\big(
Q^{i,j}_{\mathbf P,\mathcal G}(\mathbf s,\mathbf a) - Q^{i,j}_{\mathbf P',\mathcal G'}(\mathbf s,\mathbf a) + Q^{i,j}_{\mathbf P',\mathcal G'}(\mathbf s,\mathbf a) - Q^{i, j}_{\mathbf P,\mathcal G'}(\mathbf s,\mathbf a)
\big)
\big]
\end{aligned}
\]
Due to Proposition \ref{prop:global}, we have $\nabla \log \pi_j(a^j|s^j) Q^{i,j}_{\mathbf P,\mathcal G}(\mathbf s,\mathbf a) = \nabla \log \pi_j(a^j|s^j) Q^{i}_{\mathbf P}(\mathbf s,\mathbf a)$; where we denote $Q_{\mathbf P}$ the conventional Q value estimated on the transition kernel $\mathbf P$. 
As a result, using \(\|\nabla\log\pi_j\|\le B_j\) and a triangular inequality gives a standard bound
\begin{align*}
&\big\|
\nabla_{\pi_j}\mathcal J(\boldsymbol\pi;\mathcal G) - \nabla_{\pi_j}\mathcal J(\boldsymbol\pi;\mathcal G')
\big\|\\
&\qquad \le
B_j \;\bigg[ \underbrace{\sup_{\mathbf s,\mathbf a}\big|Q^i_{\mathbf P}(\mathbf s,\mathbf a) - Q^i_{\mathbf P'}(\mathbf s,\mathbf a)\big|}_{T_1}  + 
\underbrace{\sup_{\mathbf s,\mathbf a}\big|Q^{i, j}_{\mathbf P,\mathcal G'}(\mathbf s,\mathbf a) - Q^{i, j}_{\mathbf P',\mathcal G'}(\mathbf s,\mathbf a)}_{T_2}\big| \bigg]
\end{align*}

\paragraph{Bound on \(T_1\).}
The bound in $T_1$ is a standard result in model-based learning. In particular, from the simulation lemma (see e.g. Lemma 2.2 in \citet{agarwal2019reinforcement}), one obtains
\[
\sup_{\mathbf s,\mathbf a}\bigl| Q^i_{\mathbf P}(\mathbf s,\mathbf a) - Q^i_{\mathbf P'}(\mathbf s,\mathbf a) \bigr|
\le
\frac{\gamma\,\varepsilon\,R_{\max}}{(1-\gamma)^2}.
\]

\paragraph{Bound on \(T_2\).}
To bound $T_2$, we observe that given a trajectory $\tau$ and a proper graph $\mathcal G'$, then the estimated graph-truncated values do not depend on the underlying kernel; the difference only presents in the expectation with the difference in probability of trajectories.
We start from the recursive form of $Q^{i, j}_{\mathbf P, \mathcal G}$ in \eqref{eq:eq7}, as
\begin{align*}
Q^{i, j}_{\mathbf P, \mathcal G}(\mathbf s, \mathbf a) &=  \mathbb E_{\tau\sim\mathbf  P,\ \boldsymbol\pi}
\Bigl[
\sum_{t=0}^{\infty} \gamma^{t} r_t^i 1(j\in \text{Pa}^i(s^i_t, t, \tau))
\ \Big|\ \mathbf s_0=\mathbf s,\ \mathbf a_0=\mathbf a
\Bigr]\\
&= \mathbb E_{\tau\sim \mathbf P,\ \boldsymbol\pi}
\Bigl[r^i_0 1(j\in\text{{Pa}}^i(s_0^i, 0, \tau))+
\sum_{t=1}^{\infty} \gamma^{t} r_t^i 1(j\in \text{Pa}^i(s^i_t, t, \tau))
\ \Big|\ \mathbf s_0=\mathbf s,\ \mathbf a_0=\mathbf a
\Bigr]\\
&=\begin{multlined}[t]
    \mathbb E_{\tau\sim \mathbf P,\ \boldsymbol\pi}
\Biggl[r^i_0 1(j\in\text{{Pa}}^i(s_0^i, 0, \tau))+
\gamma \sum_{t=1}^{\infty} \gamma^{t-1} r_t^i \bigg[1(j\in \text{Pa}^i(s^i_t, t-1, \tau)) \\
+ 1\bigg(j \in \bigcup_{k\in \text{Pa}^i(s^i_t, t-1, \tau)}\text{Pa}^k(s^k_{0})\bigg)\big(1 - 
1(j\in \text{Pa}^i(s^i_t, t-1, \tau))\big)\bigg]
\ \Big|\ \mathbf s_0=\mathbf s,\ \mathbf a_0=\mathbf a
\Biggr]
\end{multlined}\\
&= \mathbb E \bigg[ r^i_0 1(j \in \text{Pa}^i(s^i_0, 0, \tau)) + \gamma Q^{i, j}_{\mathbf P, \mathcal G}(\mathbf s_1, \mathbf a_1) + \gamma^{t'}r_{t'} \Big|\ \mathbf s_0=\mathbf s,\ \mathbf a_0=\mathbf a \bigg].
\end{align*}
The last equality uses the definition of $Q^{i, j}$, and an observation that $1\Big(j \in \cup_{k\in \text{Pa}^i(s^i_t, t-1, \tau)}\text{Pa}^i(s^k_{0})\Big)\Big(1 - 
1\big(j\in \text{Pa}^i(s^i_t, t-1, \tau)\big)\Big) =1$ only if $j$ is not in the $(t-1)$-hop parents to $s^i_t$, but is in the $t$-hop parents, i.e. the definition of $t'$.
Then, 
\begin{align*}
    &Q^{i, j}_{\mathbf P,\mathcal G'}(\mathbf s,\mathbf a) - Q^{i, j}_{\mathbf P',\mathcal G'}(\mathbf s,\mathbf a) \\
    &\quad= \mathbb E[r^i|\mathbf s, \mathbf a] 1\big( j\in \text{Pa}^i( s^i, 0) \big) + \gamma \sum_{\mathbf s'}\mathbf P(\mathbf s'|\mathbf s, \mathbf a) \big)
    \sum_{\mathbf a'}\boldsymbol{\pi}(\mathbf a'| \mathbf s')
    Q^{i, j}_{P, \mathcal G'}(\mathbf s', \mathbf a') + \mathbb E_{\mathbf P, \boldsymbol{\pi}}[\gamma^{t'}r_{t'}|\mathbf s, \mathbf a] \\
    &\quad \qquad -\mathbb E[r^i|\mathbf s, \mathbf a] 1\big( j\in \text{Pa}^i (s^i, 0) \big) - \gamma \sum_{\mathbf s'}\mathbf P'(\mathbf s'|\mathbf s, \mathbf a)
    \big)
    \sum_{\mathbf a'}\boldsymbol{\pi}(\mathbf a'| \mathbf s')
    Q^{i, j}_{P', \mathcal G'}(\mathbf s', \mathbf a') - \mathbb E_{\mathbf P', \boldsymbol{\pi}}[\gamma^{t'}r_{t'}|\mathbf s, \mathbf a ]\\
    & \quad =
    \begin{multlined}[t]
        \gamma \sum_{\mathbf s'}\bigg[ 
    \mathbf P(\mathbf s'|\mathbf s, \mathbf a)\sum_{\mathbf a'}\boldsymbol{\pi}(\mathbf a'| \mathbf s')
    Q^{i, j}_{P, \mathcal G'}(\mathbf s', \mathbf a')
     - \mathbf P'(\mathbf s'|\mathbf s, \mathbf a)\sum_{\mathbf a'}\boldsymbol{\pi}(\mathbf a'| \mathbf s')
    Q^{i, j}_{P', \mathcal G'}(\mathbf s', \mathbf a')\bigg] \\+ \left[ \mathbb E_{\mathbf P, \boldsymbol{\pi}}[\gamma^{t'}r_{t'}|\mathbf s, \mathbf a] - \mathbb E_{\mathbf P', \boldsymbol{\pi}}[\gamma^{t'}r_{t'}|\mathbf s, \mathbf a]\right].
    \end{multlined}
\end{align*}
From the definition of $Q^{i, j}_{\mathbf P, \mathcal G}$; it is easy to see that $0 \leq Q^{i, j}_{\mathbf P, \mathcal G} \leq \frac{R_{\max}}{1-\gamma}$, then the first term of the above equation can be bounded by
\begin{align*}
    \gamma \|\mathbf P(\cdot| \mathbf s, \mathbf a) - \mathbf P'(\cdot|\mathbf s, \mathbf a)\|_1\frac{R_{\max}}{1-\gamma}\leq
    \frac{ \gamma \epsilon R_{\max}}{1-\gamma}.
\end{align*}
For the second term, we observe that the timestep $t'$ of a given trajectory $\tau$ is the same regardless of whether the underlying transition probability is $\mathbf P$ or $\mathbf P'$, 
because we use the same graph $\mathcal G'$ and the paths on this graph are unchanged. As a result, we can think of having both $\mathbf P$ and $\mathbf P'$ to have the same reward function, but this reward function is non-Markovian on the original state space because rewards are only bestowed on the first timestep when agents $i$ and $j$ are connected. 
For that reason, we cannot directly apply the simulation lemma argument as in $T_1$.
Fortunately, this can easily be fixed by augmenting the state space so that the rewards are Markovian on this new state space.
More specifically, for an agent $j$, we augment each state $\mathbf s$ with a vector of flags $\mathbf f\in \{0, 1\}^N$, indicating whether the agent $j$ and other agents have already been connected. Let $\tilde{\boldsymbol{s}}=(\mathbf s, \mathbf f)$. The augmented transition kernel can be defined as
\[\tilde{\mathbf s}_{t+1}=(\mathbf s_{t+1}, \mathbf f_{t+1}) , \quad \mathbf s_{t+1}\sim \mathbf P(\cdot|\mathbf s_t, \mathbf a_t), \quad \mathbf f^i_{t+1}=1\bigg(i\in  \bigcup_{k; \mathbf f^k_t=1}\text{Pa}^k(s^k)\bigg), \quad \mathbf f_0^k=\begin{cases}
    1 & k=j\\
    0 & k\neq j
\end{cases}.\]
Under this augmented state space, the reward $r_{t'}$ is Markovian,
\[r_{t'}(\tilde{\mathbf s}_t, \mathbf a_t, \tilde{\mathbf s}_{t+1}) = r(\mathbf s_t, \mathbf a_t) 1(\mathbf f^i_t=0 \;\land\;\mathbf f^{i}_{t+1}=1 ), \]
Here, the indicator term denotes that the agent $i$ is not connected to $j$ in the current step but is in the next step. As a result, we can apply the same argument as in the bound $T_1$ by using the simulation lemma to conclude that
\[\sup_{\mathbf s, \mathbf a}\Big|\mathbb E_{\mathbf P, \boldsymbol{\pi}}[\gamma^{t'}r_{t'}|\mathbf s, \mathbf a] - \mathbb E_{\mathbf P', \boldsymbol{\pi}}[\gamma^{t'}r_{t'}|\mathbf s, \mathbf a]\Big| \leq \frac{\gamma\epsilon R_{\max}}{(1-\gamma)^2}.\]
Combining all the bounds, we get
\[
\big\|
\nabla_{\pi_j}\mathcal J(\boldsymbol\pi;\mathcal G) - \nabla_{\pi_j}\mathcal J(\boldsymbol\pi;\mathcal G')
\big\| \leq \gamma \epsilon B_j R_{\max}\frac{3-\gamma}{(1-\gamma)^2} < \frac{3\gamma \epsilon B_j  R_{\max}}{(1-\gamma)^2}.
\]
\end{proof}

\begin{proof}[Derivation of the inequality \eqref{eq:mutual_info_bound}]
Given the true graph $\mathcal G$ and an approximated graph $\mathcal G'$.
The transition kernel $\mathbf P$ that is restricted to the graph structure of $\mathcal G'$ can be defined as the marginal distribution:
\[P^i_{\mathcal G'}(s^i| \mathbf s, \mathbf a) =   P^i\big(s^i{'} \big| s^i, a^k\,:\,k\in \text{Pa}^i_{\mathcal G'}(s^i)\big)\quad \forall i\in [N].\]
Now we consider too possible scenarios with the mismatch between $\mathcal G$ and $\mathcal G'$:
\begin{enumerate}
    \item $\mathcal G'$ contains an edge that does not exist in $\mathcal G$. In other word, $\exists \mathbf s \in \boldsymbol{\mathcal S}, i, j \in [N]$ such that $j\in \text{Pa}^i_{\mathcal G'} (\mathbf s)$ and $j\not\in \text{Pa}^i(\mathbf s)$. In this case, since $\mathbf P_{\mathcal G}$ is the true transition kernel $\mathbf P$, conditioning $\mathbf P$ on additional substate-action pairs does not change the next states distributions. As a result, this has no effect on the distribution of $\mathbf P_{\mathcal G}$.
    \item $\mathcal G$ contains an edge that does not exist in $\mathcal G'$. In this case, we are essentially predicting next states with missing information; $\mathbf P_{\mathcal G}$ is the average next states prediction marginalized on these missing edges.
\end{enumerate}
As a result, we can safely ignore the former case and focus on the latter; we let for each state $s^i$, the set $\xi^i( s^i) = \text{Pa}^i( s^i)\setminus \text{Pa}^i_{\mathcal G}(s^i)$.
For simplicity, we assume that $|\xi^i( s^i)| = 1$; $\mathcal G$ differs from $\mathcal G'$ on only a single dependency agent at substate $ s^i$. 
General cases can be easily expanded by a "bridging" argument; by gradually shrinking $\mathcal G$ to $\mathcal G'$ one edge at a time, which we will present later in the proof. 

Let $m( s^i)$ be that missing agent, i.e. $\xi^i(s^i) = \{m(s^i)\}$. We then calculate the total variation of the difference between $ P^i$ and $P^i_{\mathcal G}$ as
\begin{align*}
    &\mathbb E_{\mathbf s, \mathbf a}\|P^i(\cdot|\mathbf s, \mathbf a) - P^i_{\mathcal G'}(\cdot| \mathbf s, \mathbf a)\|_1 \\
    & \begin{multlined}
        \qquad= \mathbb E_{\mathbf s, \mathbf a} \bigg\|P^i\left(\cdot\big|\left\{(s^k, a^k); k\in \text{Pa}_{\mathcal G'}^i( s^i )\right\}\cup\big\{\big(s^{m( s^i)}, a^{m( s^i )}\big)\big\}\right) \\
        - P^i\left(\cdot\big|\left\{(s^k, a^k); k\in \text{Pa}_{\mathcal G'}^i( s^i)\right\}\right)\bigg\|_1
    \end{multlined}\\
    &\qquad = \begin{multlined}[t]
        \mathbb E_{ s^i,  a^i} 
    \mathbb E_{\mathbf (s, a)^{-i, -m(s^i)}}
    \mathbb E_{s^{m(s^i)},  a^{m(s^i)}}  
    \bigg\|P^i\left(\cdot\big|\left\{(s^k, a^k); k\in \text{Pa}_{\mathcal G'}^i(s^i)\right\}\cup\big\{\big(s^{m(s^i)}, a^{m(s^i)} \big)\big\}\right) - \\ 
    P^i\left(\cdot\big|\left\{(s^k, a^k); k\in \text{Pa}_{\mathcal G'}^i(\mathbf s)\right\}\right)\bigg\|_1
    \end{multlined} \\
    &\qquad \leq 
    \begin{multlined}[t]
        \sqrt{2}\mathbb E_{} \mathbb E_{s^{m(s^i)}, a^{m(s^i)}}\Bigg[
    D_{\text{KL}}\bigg[P^i\left(s^i{'}\big|\left\{(s^k, a^k); k\in \text{Pa}_{\mathcal G'}^i(s^i)\right\}\cup\big\{\big(s^{m(s^i)}, a^{m(s^i)}\big)\big\}\right) \\ 
    \bigg\| P^i\left(s^i{'}\big|\left\{(s^k, a^k); k\in \text{Pa}_{\mathcal G'}^i( s^i)\right\}\right) \bigg]^{1/2}
    \Bigg]
    \end{multlined}
    \\
    &\qquad \leq \sqrt{2}
    \mathbb E_{\mathbf s, \mathbf a} \left[
    I(S^i{'}; (S^{m(s^i)}, A^{m(s^i)}))^{1/2}\big|\left\{(s^k, a^k); k\in \text{Pa}_{\mathcal G'}^i(\mathbf s)\right\}\right]\\
    &\qquad \leq \sqrt{2} \sum_j^N \mathbb E_{\mathbf s, \mathbf a} \left[
    I(S^i{'}; (S^{j}, A^{j}))^{1/2}\big|\left\{(s^k, a^k); k\in \text{Pa}_{\mathcal G'}^i(\mathbf s)\right\}\right],
\end{align*}
where the expectations are over the joint state and action distributions (of an arbitrary policy). The first inequality is due to Pinsker's inequality, the second inequality is a combination of Jensen's inequality (with square root) and the mutual information relation to the KL divergence. The last inequality is due to the fact that $m(s^i) \in [N]$.

Now, for a general graph $\mathcal G'$, we construct a sequence of graphs $\mathcal G_1, \mathcal G_2, \dots, \mathcal G_N$, where any two consecutive graphs differ from each other at most one edge at each state $s^i$, i.e. $|(\text{Pa}_{\mathcal G_n}^i\setminus\text{Pa}_{\mathcal G_{n+1}})( s^i)|\leq 1$. By triagular inequality,

\begin{align*}
    \mathbb E\|P^i-P^i_{\mathcal G'}\|_1 &\leq \mathbb E\|P^N-P^i_{\mathcal G'}\|_1 + \dots + \mathbb E \|P^i_{\mathcal G_1}-P^i\|_1\\
    &\leq \sqrt{2} \sum_n^N \mathbb E\bigg[ \sum_j^N I(S^i{'}; (S^{j}, A^{j})))^{1/2} \;\Big|\;\{(S^k, A^k); k \in \text{Pa}^i_{\mathcal G_n}(\mathbf s)\}\bigg].
\end{align*}
\end{proof}

\subsection{Reward Dependence Graph}\label{app:reward-graph}

In addition to state transitions, agents may also interact through
their immediate rewards. To capture such interactions, a reward-dependence graph can be defined that specifies how agents’ actions
influence one another’s instantaneous rewards.
Formally, we allow the reward of agent $i$ to depend on the
state–action pairs of a subset of agents, denoted by
$\text{Pa}^i_\textnormal{rw} \subseteq [N]$. Similar to the state-dependence set of the transition probabilities, the reward function of agent $i$ can be defined in the same way as
\[r^i \left(\{s_k, a_k \; |\; k \in \text{Pa}^i_\textnormal{rw}(s_i) \} \right)\in [0, R_\text{max}].\]
Also, in this section only, we will denote $\text{Pa}^i_\textnormal{st}(\cdot)$ as the state dependence sets to distinguish it from the reward dependence sets explicitly.

\begin{proposition}\label{prop:global_generalized}
    Fix a joint policy $\boldsymbol{\pi}$. 
    Let $i, j \in \mathcal N$.
    Let $\textnormal{Pa}^i_\textnormal{rw}(s_i)$ and $\textnormal{Pa}_\textnormal{st}^i(s_i)$ be the reward- and state-dependence agent set at state $s_i$ of agent $i$, respectively. 
    The policy gradient $\nabla_{\pi_j}\mathcal J^i(\boldsymbol{\pi})$ is given by
    \begin{equation}
        \mathbb E_{\rho_{\boldsymbol{\pi}}, \boldsymbol{\pi}}\left[\nabla \log \pi_j(a_0^j|s_0^j)\mathbb E_{\tau}\bigg [ \sum_{t=0} \gamma^{t} r^i_t  1\Big(j \in  \textnormal{Pa}^i(s_t^i,t, \tau) \Big)\bigg| \mathbf s_0, \mathbf a_0 \bigg]\right], \label{eq:dependency_graph_pg_rw}
    \end{equation}
    where $\textnormal{Pa}^i(s_t^i,k, \tau)$ with $k > 0$ is the $k$-hop parents of $s_t^i$ on the trajectory $\tau$, defined as
    \[\textnormal{Pa}^i(s_t^i, k, \tau)=\bigcup_{m\in \textnormal{Pa}^i(s^i_{t}, k-1, \tau)} \textnormal{Pa}_\textnormal{st}^m\big(s^m_{t-k}\big),\]
    and we define the base-case of 0-hop parents as $\textnormal{Pa}^i(s_t^i, 0, \tau) = \textnormal{Pa}^i_\textnormal{rw}(s_t^i)$. 
    A path is defined according to the state dependence graph $\mathcal G$, similar to Proposition \ref{prop:global}.
\end{proposition}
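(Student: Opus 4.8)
The plan is to replay the timestep-by-timestep unrolling from the proof of Proposition~\ref{prop:global}, changing only the point at which the reward structure enters. Starting from the multi-agent policy gradient theorem,
\[
\nabla_{\pi_j}\mathcal J^i(\boldsymbol{\pi}) = \mathbb E_{\rho_{\boldsymbol{\pi}}, \boldsymbol{\pi}}\big[\nabla \log \pi_j(a_0^j|s_0^j)\, Q^i(\mathbf s_0, \mathbf a_0)\big],
\]
I would expand $Q^i$ as the discounted sum $\sum_{t\ge 0}\gamma^t\,\mathbb E_\tau[r^i_t\mid \mathbf s_0,\mathbf a_0]$ and treat each reward term separately, asking whether agent $j$'s action $a_0^j$ can influence it in expectation.

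The only two departures from the local-reward case are the terminal dependence and how it propagates backward. At $t=0$ the reward $r^i_0$ now depends on the agents in $\text{Pa}^i_\text{rw}(s_0^i)$, so the zero-mean identity $\mathbb E_{a_0^j}[\nabla\log\pi_j(a_0^j|s_0^j)\,r^i_0]=0$ holds exactly when $j\notin\text{Pa}^i_\text{rw}(s_0^i)$; this produces the modified base case $\text{Pa}^i(s_0^i,0,\tau)=\text{Pa}^i_\text{rw}(s_0^i)$ in place of the singleton $\{i\}$. For $t\ge 1$ I would write $\mathbb E[r^i_t\mid\mathbf s_0,\mathbf a_0]$ by explicitly marginalizing over the intermediate joint states and factorizing the transition kernel into its per-agent products, exactly as in the proof of Proposition~\ref{prop:global}. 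Since $r^i_t$ depends on the time-$t$ states of the agents in $\text{Pa}^i_\text{rw}(s_t^i)$, and each such state $s_t^m$ is governed only by $\text{Pa}_\text{st}^m(s_{t-1}^m)$, tracing this dependence backward one layer at a time shows that $a_0^j$ can reach $r^i_t$ only if $j$ lies in the $t$-hop set $\text{Pa}^i(s_t^i,t,\tau)$ obtained by iterating the peeling from the reward-dependence base. Whenever $j$ is absent from this set, the conditional reward term is $a_0^j$-independent and the zero-mean property of $\nabla\log\pi_j$ eliminates it, leaving precisely the indicator $1(j\in\text{Pa}^i(s_t^i,t,\tau))$.

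The main obstacle is bookkeeping rather than anything conceptual: I must verify that seeding the recursion with the set $\text{Pa}^i_\text{rw}(s_t^i)$ instead of $\{i\}$ leaves the backward state-propagation step \eqref{eq:k_hop_def} structurally intact. Because each agent influences its own next state ($m\in\text{Pa}_\text{st}^m(s^m)$), the per-layer peeling argument carries over verbatim; taking a union over a set of reward-relevant agents rather than over a single agent only widens the base of the cone and does not interact with the transition factorization. Unlike Proposition~\ref{prop:global}, the target statement is already phrased with the per-timestep indicator, so no monotonicity step (collapsing the indicators to a single meeting time $t'$) is required, and the argument terminates as soon as the indicator form is reached.
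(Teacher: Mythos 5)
Your proposal is correct and matches the paper's intent exactly: the paper omits the proof, stating only that it follows identically to Proposition~\ref{prop:global} with the $0$-hop parent set re-seeded by $\textnormal{Pa}^i_\textnormal{rw}$, and your unrolling — applying the zero-mean score identity at $t=0$ against the reward-dependence set, then peeling back through the factorized transition kernel for $t\ge 1$ — is precisely that adaptation spelled out. Your observation that no final monotonicity/meeting-time step is needed here, since the statement is already in indicator form, is also accurate.
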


We note that the difference between this result and the result in \ref{prop:global} is the definition of $k$-hop parents. 
In particular, the $0$-hop parents of an agent are initialized using the reward-dependence set, rather than the state-dependence graph alone. 
Proposition \ref{prop:global_generalized} is strictly more general than proposition \ref{prop:global}.
Indeed, the $0$-hop parent set defined in \ref{app:proof} is a special case of $\textnormal{Pa}^i(s_t^i, 0, \tau)$ under the local reward settings $\textnormal{Pa}^i_\textnormal{rw}(s_t^i) = \{i\}$.
The proof of Proposition~\ref{prop:global_generalized} follows identically to that of Proposition~\ref{prop:global} and is therefore omitted.
Intuitively, the reward dependence set captures the immediate effects and thus contributes directly to the 0-hop parents.
However, longer-term influences arise through the state-dependence graph, as state transitions propagate the effects of agents’ states and actions into the future.
Consequently, the parent sets are initialized by the reward-dependence sets and then expand recursively according to the state-dependence sets.

\section{Proof of Theorem \ref{theom:sample_complexity}}\label{sec:main_proof}
In this section, we let $\boldsymbol{\theta}=(\theta_1, \theta_2, \dots, \theta_N)$ with each $\theta_i$ the parameters of agent $i$'s policy. Additionally, we sometimes use $\pi_j$ and $\pi_{\theta_j}$ interchangeably. We assume the following standard assumption on Lipchitzness and smoothness to hold for each agent.
\begin{assumption}[single-agent E-LS \cite{yuan2022general}]\label{assm:e-ls}
    There exists constants $G, F>0$ such that for every state $\mathbf s$ and every agent $i$,
    \[\mathbb E_{a^i\sim\pi_i} \left[\| \nabla_{\theta_i}\log \pi_i(a^i|s^i) \|^2  \right]\leq G^2\]
    \[\mathbb E_{a^i\sim\pi_i} \left[\| \nabla_{\theta_i}^2\log \pi_i(a^i|s^i) \|  \right]\leq F\]
\end{assumption}
Before presenting the proof, we want to note a critical detail that we glossed over in the main text. That is, at the timestep $t$, we do not have enough information to determine $T_{ji}(s_t^j)$. In simple terms, the meeting time (or equivalently the length of the path defined in \ref{def:path}) depends on the sequence of future state trajectories that are yet to be observed at $t$. In formal terms, $T_{ji}$ is not a stopping time. This will not be a problem if we sample the whole infinite trajectory $\tau_{0:\infty}$ and then consider the meeting time in hindsight. In a practical setting, we sample trajectories at a truncated $H$ length. As a result, the meeting time that we consider in this setting should only consider paths that are determined before time $H$; any interaction beyond the time horizon $H$ will be dropped, that is, we consider such meeting times at $H$ to be $\infty$ (or equivalent to saying they are unreachable), even though they can meet in a finite future. Formally, we define 
\begin{equation}
    T_{ji}^H(s_t^i, \mathbf s_{0:H}, \mathcal G)=\min\left(\{H \geq t'\geq t: j \in \text{Pa}^i(s^i_{t'}, t'-t, \mathbf s_{0:t'})  \} \cup \{\infty\}\right). \label{eq:truncated_graph_gradient_estimate}
\end{equation}
Which is totally well-defined after observing the first $H$ timesteps. 
Also, note that $T_{ji}^H$ is an upper bound of $T_{ji}$, $T_{ji}^H(s_t^j, \tau)\geq T_{ji}(s_t^j, \tau), \forall t \leq H$.
Then the $m$ trajectories estimate of the dependent graph policy gradient can be defined based on this new meeting timestep,
\[\hat{\nabla}^m_{\theta_j}J^i(\boldsymbol{\theta}, \mathcal G) = \frac{1}{m}\sum_{l=1}^m\sum_{t=0}^{H}\gamma^t \nabla_{\theta_j}\log \pi_j(a^j_{t, l}|s^j_{t, l}) \sum_{k=t + T_{ji}^H(s_{t, l}^j)}^H\gamma^k r_{k, l}^i.\]

Define $K_{ji}(n, \mathbf s_{0:n}):=\max\{k\leq n: j \in \text{Pa}^i(s_n^i, n-k, \mathbf s_{0:n}) \}\leq n$ to be the last timestep at which the action has a consequence to timestep $n$, and $g_j^i(\tau)$ and $g_j^i(\tau, \mathcal G)$ to be the single-trajectory estimate of traditional policy gradient and dependence graph policy gradient, respectively. For convenience, we also write $K_{ji}(n)$ when the (state) trajectory is obvious from the context. We then quantify the single-trajectory estimated gradient norms as
\begin{align*}
    \mathbb E_\tau \left[\| g^i_j(\tau, \mathcal G)\|^2\right] &= \mathbb E \left[ \left\| \sum_{t=0}^{H-1} \nabla \log \pi_j(a^i_t|s^i_t) \sum_{k=t+T_{ji}^H}^{H-1}\gamma^k r_k^i \right\|^2 \right] \notag\\
   &= \mathbb E \left[ \left\| \sum_{t=0}^{H-1} \gamma^t r_t^i \sum_{k=0}^{K_{ji}(t)} \nabla \log \pi_j(a^i_k|s^i_k) \right\|^2 \right] \notag\\
    &= \mathbb E \left[ \left\| \sum_{t=0}^{H-1} \gamma^{t/2}r^i_t \gamma^{t/2}\sum_{k=0}^{K_{ji}(t)}\nabla \log \pi_j(a^i_k|s^i_k) \right\|^2 \right] \notag\\
    &\leq \mathbb E \left[ \left(\sum_{k'=0}^{H-1}\gamma^{k'} (r_{k'}^i)^2  \right) \left( \sum_{t=0}^{H-1} \gamma^{t} \left\|\sum_{k=0}^{K_{ji}(t)}\nabla \log \pi_j(a^i_k|s^i_k) \right\|^2 \right)\right] \notag\\
    &\leq \frac{R_{\max}^2}{(1-\gamma)}
    \mathbb E\left[ \left( \sum_{t=0}^{H-1} \gamma^{t} \left\|\sum_{k=0}^{K_{ji}(t)}\nabla \log \pi_j(a^i_k|s^i_k) \right\|^2 \right)\right] \notag\\
    &= \frac{R_{\max}^2}{(1-\gamma)} 
    \sum_{t=0}^{H-1} \gamma^{t}\mathbb E\left[ \sum_{k=0}^{K_{ji}(t)} \left\|\nabla \log \pi_j(a^i_k|s^i_k) \right\|^2 \right] \tag{Lemma \ref{lemma:num2}}\\
    &\leq \frac{R_{\max}^2  G^2}{(1-\gamma)}
    \sum_{t=0}^{H-1} \gamma^{t}\mathbb E_\tau[K(t, \tau)]
\end{align*}
We can bound the final term as
\begin{align*}
    \sum_{t=0}^{H-1}\gamma^t K_{ji}(t)&\leq\sum_{t=0}^\infty\gamma^t\sum_{k=0}^t 1(k \leq K_{ji}(t))\\
    &= \sum_{k=0}^\infty \sum_{t={k+T_{ji}(\tau, s_k^j)}}^\infty \gamma^t\\
    &= \sum_{k=0}^{\infty}\frac{\gamma^{k+T_{ji}(\tau, s_k^j)}}{1-\gamma}
\end{align*}
Note that we can now switch back to the meeting time in \eqref{def:meetingtime} when the time horizon $H=\infty$. Taking the expectation
\begin{align}
    \mathbb E\left[ \sum_{t=0}^{H-1}\gamma^t    K_{ji}(t)\right]&\leq \frac{1}{1-\gamma}\sum_{k=0}^\infty \gamma^k \mathbb E_\tau \left[\gamma^{T_{ji}(\tau, s_k^j)}\right]\\
    &\leq \frac{\Gamma_{ji}}{(1-\gamma)^2} \label{eq:boundK}
\end{align}
where we define $\Gamma_{j}^i:=\sup_{k, \boldsymbol{\pi}}\mathbb E_\tau \left[ \gamma^{T_{ji}(\tau, s_k^j)} \right]\leq 1$. Substitute everything back, and we get
\[\mathbb E_\tau \left[\| g_{j}^i(\tau, \mathcal G)\|^2\right] \leq \frac{R_{\max}^2  G^2 \Gamma_{j}^i}{(1-\gamma)^3}\]

Finally, summing over all the cross gradient terms,  
\begin{align*}
    \mathbb E_\tau \left[ \left\| g_j(\tau, \mathcal G) \right\|^2 \right]=
    \mathbb E_\tau \left[ \left\| \sum_i^N g_{j}^i(\tau, \mathcal G) \right\|^2 \right] &\leq \sum_i^N \mathbb E_\tau \left[ \left\|  g_{j}^i(\tau, \mathcal G) \right\|^2 \right]\\
    &\leq \frac{R_{\max}^2G^2}{(1-\gamma)^3}\sum_i^N \Gamma_{j}^i.
\end{align*}
Then we have a bound on the stacked gradient estimate variance 
\begin{equation}
    \mathbb E_\tau [\|g(\tau, \mathcal G)\|^2]:= \mathbb E\tau \left\|\begin{bmatrix}
    g_1(\tau, \mathcal G)\\
    \vdots\\
    g_N(\tau, \mathcal G) 
\end{bmatrix}\right \|^2 = \sum_{j=1}^N\mathbb E_\tau [\| g_j(\tau, \mathcal G)\|^2] \leq \frac{R_{\max }^2 G^2}{(1-\gamma)^3}\sum_{j=1}^N \Gamma_j.\label{eq:est_joint_grad_var}
\end{equation}
where we denote $\Gamma_j=\sum_i^N \Gamma^i_j$. Since $\hat{\nabla}_{\pi_j}J(\boldsymbol{\pi}, \mathcal G)$ and $\hat{\nabla}_{\pi_j}J(\boldsymbol{\pi})$ have the same expectation equals $\nabla_{\pi_j}J_H(\boldsymbol{\pi})$, the following results follow from \cite{yuan2022general}, lemma 4.2, and \eqref{eq:est_joint_grad_var},
\begin{align}
    \mathbb E_\tau \left[ \|\hat{\nabla}_{\boldsymbol{\theta}}J_{}(\boldsymbol{\pi}, \mathcal G)\|^2 \right] &\leq \left( 1-\frac{1}{m} \right)\|\nabla_{\boldsymbol{\theta}}J_H(\boldsymbol{\pi})\| + \frac{G^2R_{\max}^2}{m(1-\gamma)^3}\sum_j^N \Gamma_{j}\label{eq:var_graph_grad}
\end{align} 
\begin{equation}
    \mathbb E_\tau \left[ \|\hat{\nabla}_{\boldsymbol{\theta}}J_{}(\boldsymbol{\pi})\|^2 \right] \leq \left( 1-\frac{1}{m} \right)\|\nabla_{\boldsymbol{\theta}}J_H(\boldsymbol{\pi})\| + \frac{N^2G^2R_{\max}^2}{m(1-\gamma)^3}\label{eq:var_gradient}
\end{equation}
where $\hat{\nabla}_{\pi_j}J(\boldsymbol{\pi}, \mathcal G)=\frac{1}{m}\sum g_j(\tau, \mathcal G)$ and $\hat{\nabla}_{\pi_j}J(\boldsymbol{\pi})=\frac{1}{m}\sum g_j(\tau)$, respectively. 
Then we have the following properties
\begin{align*}
    \mathbb E_{\boldsymbol{a}\sim\boldsymbol{\pi}} \left[\| \nabla_{\boldsymbol{\theta}}\log \boldsymbol{\pi}(\boldsymbol{a}|\mathbf{s}) \|^2  \right]&= \mathbb E_{\boldsymbol{a}\sim\boldsymbol{\pi}} \left[\left\| \nabla_{\boldsymbol{\theta}}\bigg(\sum_i^N\log {\pi_i}({a^i}|{s^i})\bigg) \right\|^2  \right]\\
    &= \sum_i^N \mathbb E_{a^i\sim\pi_i} \left[\| \nabla_{\theta_i}\log \pi_i(a^i|s^i) \|^2  \right] \leq NG^2
\end{align*}
where the first equality is due to the independent structure of each agent's policy, and the second equality is due to the stacked structure of $\boldsymbol{\theta}$. Furthermore, since $\nabla^2_{\boldsymbol{\theta}}\log \boldsymbol{\pi}(\boldsymbol{a}|\mathbf{s})$ is a block diagonal matrix, its norm equals its largest diagonal block spectral norm,
\begin{align*}
\mathbb E_{\boldsymbol{a}\sim\boldsymbol{\pi}} \left[\| \nabla^2_{\boldsymbol{\theta}}\log \boldsymbol{\pi}(\boldsymbol{a}|\mathbf{s}) \|  \right]&=\mathbb E_{\boldsymbol{a}\sim\boldsymbol{\pi}} \left[ \max_i \|\nabla_{\theta_i}^2 \log \pi_i(a^i|s^i)\| \right] \\
    &\leq\sum_i^N \mathbb E_{a^i\sim\pi_i} \left[\| \nabla^2_{\theta_i}\log \pi_i(a^i|s^i) \|  \right] \leq NF
\end{align*}
This suggests that single-agent E-LS (\ref{assm:e-ls}) implies joint E-LS. 
However, we can additionally utilise the dependence graph structure of the Networked MDP to obtain a tighter bound on the smoothness coefficient of the objective $J(\boldsymbol{\theta})$.
\begin{lemma}
    Under assumption \ref{assm:e-ls}, $J(\boldsymbol{\theta})$ is $L$-smooth, namely $\|\nabla^2_{\boldsymbol{\theta}}J(\boldsymbol{\theta}) \|\leq L$, with
    \[L_i=\frac{R_{\max}\max_{j\leq N}\sum_i^N\Gamma_{j}^i}{(1-\gamma)^2}(G^2 + F).\]
\end{lemma}
\begin{proof} 
Denote $g_{ji}(\tau, \mathcal G)= \sum_{t=0}^{\infty} \gamma^t r_t^i \left( 
        \sum_{k=0}^{K_{ji}(t)}\nabla_{\theta_j}
        \log \pi_{\theta_j}(a_k^j|s_k^j) 
        \right)$, $\psi_j(\tau)=\sum_{t=0}^\infty\nabla_{\theta_j}\log\pi _{\theta_j}(a_t^j|s_t^j)$ and $h_{ji}(\tau, \mathcal G)=\sum_{t=0}^{\infty} \gamma^t r_t^i \left( 
        \sum_{k=0}^{K_{ji}(t)}\nabla^2_{\theta_j}
        \log \pi_{\theta_j}(a_k^j|s_k^j) 
        \right)$, then
    \begin{align*}
        \nabla^2_{\boldsymbol{\theta}} J^i(\boldsymbol{\theta}) &= \nabla _{\boldsymbol{\theta}}\mathbb E_{\tau} \left(\begin{bmatrix}
            g_{1i}(\tau, \mathcal G)\\\vdots \\g_{Ni}(\tau, \mathcal G)
        \end{bmatrix} \right)\\
        &= \begin{multlined}[t]
            \underbrace{\mathbb E_\tau \left[ \begin{bmatrix}
                \psi_1(\tau)\\
                \vdots\\
                \psi_N(\tau)
            \end{bmatrix}
             \begin{bmatrix}
            g_{1i}(\tau, \mathcal G)\\\vdots \\g_{Ni}(\tau, \mathcal G)
        \end{bmatrix}^\top \right]}_{A^i}
        + \underbrace{\mathbb E_\tau \begin{bmatrix}
            h_{1i}(\tau, \mathcal G)\\
        &\ddots\\
        &&
        h_{Ni}(\tau, \mathcal G)
        \end{bmatrix}}_{B^i}
        \end{multlined}
    \end{align*}
    Then the Hessian of the total reward has the form
    \begin{align*}
        \nabla^2_{\boldsymbol{\theta}}J(\boldsymbol{\theta})=\sum_i A^i +\sum_i B^i
    \end{align*}
    We can bound $\|\sum_iB^i\|$ by
    \begin{align*}
        \left\|\sum_i B^i\right\| &= \left\|\begin{bmatrix}
            \sum_i \mathbb E_\tau h_{1i}(\tau, \mathcal G)\\
        &\ddots\\
        &&
        \sum_i \mathbb E_\tau h_{Ni}(\tau, \mathcal G)
        \end{bmatrix}\right\|\\
        &=\max_{j\leq N} \left\| \sum_i^N \mathbb E_\tau h_{ji}(\tau, \mathcal G)\right\|\\
        &\leq \max_{j\leq N}\sum_i^N\mathbb E_{\tau}\left[ \|h_{ji}(\tau, \mathcal G)\| \right]\\
        &= \max_{j\leq N}\sum_i^N\mathbb E_{\tau} \left[\left\| \sum_{t=0}^{\infty} \gamma^t r_t^i 
        \sum_{k=0}^{K_{ji}(t)}\nabla^2_{\theta_j}
        \log \pi_{\theta_j}(a_k^j|s_k^j) 
        \right\| \right]\\
        &\leq \max_{j\leq N}\sum_i^N \mathbb E_{\tau} \left[ \sum_{t=0}^{\infty} \gamma^t r_t^i 
        \sum_{k=0}^{K_{ji}(t)}\left\|\nabla^2_{\theta_j}
        \log \pi_{\theta_j}(a_k^j|s_k^j) 
        \right\| \right]\\
        &\leq R_{\max}\max_{j\leq N} \sum_i^N \sum_{t=0}^{\infty}\gamma^t\mathbb E_{\tau} \left[ 
        \sum_{k=0}^{K_{ji}(t)}\left\|\nabla^2_{\theta_j}
        \log \pi_{\theta_j}(a_k^j|s_k^j) 
        \right\| \right]\\
        &\leq R_{\max}F\max_{j\leq N} \sum_i^N\sum_{t=0}^\infty \gamma^t \mathbb E_\tau [K_{ji}(t)]\\
        &\leq \frac{R_{\max}F \max_{j\leq N}\sum_i^N \Gamma^i_j}{(1-\gamma)^2}
    \end{align*}
    where the final inequality follows similarly to \eqref{eq:boundK}, the second inequality is due to the triangular inequality, and the penultimate inequality is due to the assumption \ref{assm:e-ls} that holds for each state and by lemma \ref{lemma:num2}. Now we need to bound $A$ norm, first observe that $A$ is a block diagonal matrix due to lemma \ref{lemma:block_diag},
    \begin{align*}
        A&=\mathbb E_\tau\begin{bmatrix}
            \sum_i^N g_{1i}(\tau)\psi_1(\tau)^\top \\
            &\ddots\\
            &&
            \sum_i^N g_{Ni}(\tau)\psi_N(\tau)^\top
        \end{bmatrix}
    \end{align*}
    As a result, we can bound the norm of $A$ by
    \begin{align*}
        \|A\|&= \left\| \begin{bmatrix}
            \mathbb E_\tau\sum_i^N g_{1i}(\tau)\psi_1(\tau)^\top \\
            &\ddots\\
            &&
            \mathbb E_\tau\sum_i^N g_{Ni}(\tau)\psi_N(\tau)^\top
        \end{bmatrix}\right\|\\
        &= \max_{j\leq N}\left\|\mathbb E_\tau\sum_i^N g_{ji}(\tau)\psi_j(\tau)^\top\right\|\\
        &\leq \max_{j\leq N}\mathbb E_\tau\sum_i^N \sum_t^\infty \gamma^t r_t^i \left\| \left( \sum_k^{K_{ji}(t)}\nabla_{\theta_j }\log\pi_{\theta_j}(a_k^j|s_k^j) \right)\sum_{t'}^{K_{ji}(t)}\nabla_{\theta_{j} }\log\pi_{\theta_{j}}(a_{t'}^{j}|s_{t'}^{j})^\top \right\|\\
        &\leq R_{\max}\max_{j\leq N} \sum_i^N \sum_t^\infty \gamma^t \mathbb E \left\| \sum_k^{K_{ji}(t)}\nabla_{\theta_j }\log\pi_{\theta_j}(a_k^j|s_k^j) \right\|^2\\
        &\leq R_{\max}G^2 \max_{j\leq N} \sum_i^N \sum_t^\infty \gamma^t \mathbb E [K_{ji}(t)]\\
        &\leq \frac{R_{\max}G^2 \max_{j\leq N} \sum_i^N\Gamma_j^i}{(1-\gamma)^2}.
    \end{align*}
\end{proof}
Without the dependence graph, then $L$ can scale with $N$ due to the scale of the joint E-LS.
Then by Corollary 4.7 in \cite{yuan2022general} applied for the joint parameter vector $\boldsymbol{\theta}$, by choosing stepsize $\eta=\frac{\epsilon^2 m}{2L\nu}$, minibatch size $m$ between 1 and $\frac{2\nu}{\epsilon^2}$, horizon $H$ to $O(\log (N/\epsilon)/\log(1/\gamma))$, and number of iterations $T=\frac{8\delta_0L\nu}{m\epsilon^4}$ where $\delta_0=J^* - J(\boldsymbol{\theta}_0)\leq \frac{NR_{\max}}{1-\gamma}$, we have the sample complexity result of
\begin{align}
    Tm \times H &= O\left(\frac{\delta_0L \nu \log(N/\epsilon)}{\log(1/\gamma)\epsilon^4}\right)\notag \\
    &= \begin{cases}
        {O}\left( \frac{NR_{\max}^4\Gamma\max_{j}\Gamma_j \log (N/\epsilon)}{(1-\gamma)^6\epsilon^4\log(1/\gamma) }\right) & \text{ using dependence graph PG}\\
        {O}\left( \frac{N^3R_{\max}^{4}\max_{j}\Gamma_j \log (N/\epsilon)}{(1-\gamma)^6\epsilon^4\log(1/\gamma) }\right) & \text{ using vanilla PG}
    \end{cases}
\end{align}
to guarantee a convergence to stationary points $\mathbb E\left[ \|\nabla_{\boldsymbol{\theta}^U} J(\boldsymbol{\pi}_{\boldsymbol{\theta}^U})\|^2\right] \leq O(\epsilon^2)$, where $\nu=\frac{G^2 R_{\max}^2\sum_j^N \Gamma_j}{(1-\gamma)^3}$ as in \eqref{eq:var_graph_grad} when using dependence graph policy gradient, and $\nu=\frac{G^2 R_{\max}^2N^2}{(1-\gamma)^3}$ as in \eqref{eq:var_gradient} when using traditional policy gradient.

We consider a simpler version of the gradient domination condition in the literature \citep{agarwal2019reinforcement, yuan2022general} for illustrative purposes.
\begin{assumption}[Gradient domination]\label{assm:gradient_domination}
    There exists a constant $\mu>0$ such that, for all $\boldsymbol{\theta}$,
    \[\|\nabla_{\theta_j} J(\boldsymbol{\theta}) \| \geq \mu \left(\sup_{\pi_j}J(\boldsymbol{\pi}_{\boldsymbol{\theta}}^{-j}, \pi^j) -J(\boldsymbol{\theta}) \right) \quad \forall j\in [N]. \]
\end{assumption}
Then it is easy to see that $O(\epsilon^2)\geq \mathbb E\left[ \|\nabla_{\boldsymbol{\theta}^U} J(\boldsymbol{\pi}_{\boldsymbol{\theta}^U})\|^2\right]=\mathbb E \left[ \sum_j^N \|\nabla_{{\theta}^U_j} J(\boldsymbol{\pi}_{\boldsymbol{\theta}^U}) \|^2 \right] \geq \sum_j^N \left( \mathbb E\left [\|\nabla_{{\theta}^U_j} J(\boldsymbol{\pi}_{\boldsymbol{\theta}^U})\| \right] \right)^2$, so each $\mathbb E \|\nabla_{{\theta}^U_j} J(\boldsymbol{\pi}_{\boldsymbol{\theta}^U})\| \leq O(\epsilon)$. Combine this with assumption \ref{assm:gradient_domination}, we have
\[ \mathbb E[J(\boldsymbol{\boldsymbol{\theta}}^U)]\geq  \sup_{\pi_j}\mathbb E [J(\boldsymbol{\pi}_{\boldsymbol{\theta}^U}^{-j}, \pi^j)]-O(\epsilon), \quad \forall j\in [N].\]

\begin{lemma}\label{lemma:num2}
    For all non-negative integers $t\geq 0$,
    \[\mathbb E_\tau \left[ \left\| \sum_{t=0}^{K_{ji}(t)}\nabla_{\theta_{j}}\log \pi_{\theta_j}(a^j_t|s^j_t)
    \right\|^2\right] = \mathbb E_\tau \left[  \sum_{t=0}^{K_{ji}(t)}\left\|\nabla_{\theta_{j}}\log \pi_{\theta_j}(a^j_t|s^j_t)
    \right\|^2\right] \]
\end{lemma}
\begin{proof}
    The difficulty of this lemma is the dependence of $K_{ji}(t, \tau)$ to the trajectory $\tau$. More specifically, 
    \begin{align*}
 \begin{multlined}[t]
 \mathbb E_\tau \left[ \left\| \sum_{k=0}^{K_{ji}(t)}\nabla_{\theta_{j}}\log \pi_{\theta_j}(a^j_k|s^j_k)
    \right\|^2\right] =
      \mathbb E_\tau \left[  \sum_{k=0}^{K_{ji}(t)}\left\|\nabla_{\theta_{j}}\log \pi_{\theta_j}(a^j_k|s^j_k)
    \right\|^2\right] \\
    +\mathbb E_\tau \left[  \sum_{k=0}^{K_{ji}(t)}\sum_{k'\neq k}^{K_{ji}(t)}\left\langle\nabla_{\theta_{j}}\log \pi_{\theta_j}(a^j_k|s^j_k), \nabla_{\theta_{j}}\log \pi_{\theta_j}(a^j_{k'}|s^j_{k'})
    \right\rangle\right].
    \end{multlined}
    \end{align*}
    Here $K_{ji}(t)$ is measurable with respect to $\mathcal F_t$, while the gradients $\nabla_{\theta_j} \log \pi_{\theta_j}(a_k^j|s_k^j)$ are measurable with respect to $\mathcal F_k \subset \mathcal F_{t}$. In other word, $K_{ji}(t)$ is not a stopping time w.r.t. $\mathcal F_k$. To circumvent this problem, let $S_{0:t}=\sigma(\mathbf{s}_0, \mathbf{s}_1, \dots, \mathbf{s}_t)$ denote the $\sigma$-algebra generated by the state trajectory up to time 
$t$. By the tower property of conditional expectation:
\begin{align}
    \begin{multlined}[b]
        \mathbb E_\tau \left[  \sum_{k=0}^{K_{ji}(t)}\sum_{k'\neq k}^{K_{ji}(t)}\left\langle\nabla_{\theta_{j}}\log \pi_{\theta_j}(a^j_k|s^j_k), \nabla_{\theta_{j}}\log \pi_{\theta_j}(a^j_{k'}|s^j_{k'})
    \right\rangle\right] \\
    =2\mathbb E_{\boldsymbol{s}_{0:t}} \mathbb E_{\boldsymbol{a}_{0:t}} \left[  \sum_{k=0}^{K_{ji}(t)}\sum_{k'> k}^{K_{ji}(t)}\left\langle\nabla_{\theta_{j}}\log \pi_{\theta_j}(a^j_k|s^j_k), \nabla_{\theta_{j}}\log \pi_{\theta_j}(a^j_{k'}|s^j_{k'})
    \right\rangle \bigg| S_{0:t}\right]
    \end{multlined}\label{eq:cross_grad}
\end{align}
Recall that $K_{ji}(t, \boldsymbol{s}_{0:t})=\max\{k\leq t: j \in \text{Pa}^i(s_t^i, t-k, \mathbf s_{0:t}) \}$; the meeting time determined entirely by the state trajectory upto timestep $t$, hence $K_{ji}(t)$ is $S_{0:t}$-measurable; Conditioned on $S_{0:t}$, $K_{ji}(t)$ is a deterministic constant. Then
\begin{align*}
    \eqref{eq:cross_grad} &= 2\mathbb E_{\boldsymbol{s}_{0:t}}\sum_{k=0}^{K_{ji}(t)}\sum_{k'> k}^{K_{ji}(t)} \mathbb E_{\boldsymbol{a}_{0:{k'-1}}} \left[  \left\langle\nabla_{\theta_{j}}\log \pi_{\theta_j}(a^j_k|s^j_k), \underbrace{\mathbb E_{a^j_{k'}}\left[\nabla_{\theta_{j}}\log \pi_{\theta_j}(a^j_{k'}|s^j_{k'})\Big|  s_{k'}^j\right]}_{=0}
    \right\rangle \bigg| S_{0:t}\right]\\
    &=0.
\end{align*}
\end{proof}

\begin{lemma}\label{lemma:block_diag} For any agents $i, j, j'$. Let
    \[(*):=\mathbb E_\tau \left[ \sum_t^\infty \sum_{t'}^\infty \gamma^t r^i_t \left( \sum_k^{K_{ji}(t)}\nabla_{\theta_j }\log\pi_{\theta_j}(a_k^j|s_k^j) \right)\nabla_{\theta_{j'} }\log\pi_{\theta_{j'}}(a_{t'}^{j'}|s_{t'}^{j'})^\top\right],\]
    then $(*)=0$ if $j\neq j'$. And when $j=j'$, then
    \[(*)=\mathbb E_\tau \left[ \sum_t^\infty \gamma^t r^i_t \left( \sum_k^{K_{ji}(t)}\nabla_{\theta_j }\log\pi_{\theta_j}(a_k^j|s_k^j) \right)\left( \sum_{t'}^{K_{ji}(t)}\nabla_{\theta_{j} }\log\pi_{\theta_{j}}(a_{t'}^{j}|s_{t'}^{j})\right)^\top\right].\]
\end{lemma}
\begin{proof}
    We first consider the case when $j\neq j'$ and $j'\neq i$. For any timestep $t$ and $t'$,
    similar to the proof of Lemma \ref{lemma:num2}, we condition on the state trajectory up to a timestep $\max(t, t')$. Define
    \begin{align*}
    (**)&:=\mathbb E\mathbb E \left[ \gamma^t r_t^i \left( \sum_k^{K_{ji}(t)}\nabla_{\theta_j }\log\pi_{\theta_j}(a_k^j|s_k^j) \right)\nabla_{\theta_{j'} }\log\pi_{\theta_{j'}}(a_{t'}^{j'}|s_{t'}^{j'})^\top\bigg|\mathbf s_{0:\max(t, t')} \right] \\
    &=\mathbb E\mathbb E\left[ \gamma^t r_t^i \left( \sum_k^{K_{ji}(t)}\nabla_{\theta_j }\log\pi_{\theta_j}(a_k^j|s_k^j) \right)\underbrace{\mathbb E_{a^{j'}_{t'}}\left[\nabla_{\theta_{j'} }\log\pi_{\theta_{j'}}(a_{t'}^{j'}|s_{t'}^{j'})|s^{j'}_{t'}\right]^\top}_{=0}\bigg|\mathbf s_{0:\max(t, t')} \right]\\
    &=0
    \end{align*}
    where we can integrate over the gradient of agent $j'$ because $j'$ and $j$ are different agents, so their gradients are independent, and the rewards $r^i$ only depend on agent $i\neq j'$.
    
    Next, if $j\neq j'$ and $j'=i$, then 
    \begin{align*}
        (**)&=\mathbb E\mathbb E \left[ \gamma^t r_t^i \left( \sum_k^{K_{ji}(t)}\nabla_{\theta_j }\log\pi_{\theta_j}(a_k^j|s_k^j) \right)\nabla_{\theta_{i} }\log\pi_{\theta_{i}}(a_{t'}^{i}|s_{t'}^{i})^\top\bigg|\mathbf s_{0:\max(t, t')} \right]\\
        &=\mathbb E \mathbb E \left[ \gamma^t r_t^i \left( \sum_k^{K_{ji}(t)}\underbrace{\mathbb E_{a_k^j}\Big[\nabla_{\theta_j }\log\pi_{\theta_j}(a_k^j|s_k^j) |s_k^j\Big]}_{=0}\right)\nabla_{\theta_{i} }\log\pi_{\theta_{i}}(a_{t'}^{i}|s_{t'}^{i})^\top\bigg|\mathbf s_{0:\max(t, t')} \right]\\
        &=0,
    \end{align*}
    where we can integrate over all $a^j_k$ because given $s_t^i$, $r^i_t(s_t^i, a_t^i)$ is independent from all $a^j_k$ with $j\neq i$.
    Combining the two above results, and summing over all timesteps $t$ and $t'$, then we have when $j\neq j'$,
    \begin{align*}
    (*)= \sum_{t}^\infty\sum_{t'}^\infty(**)=0
    \end{align*}

    Finally, we consider the diagonal case when $j=j'$, then
    \begin{align*}
        (*)&=\mathbb E \left[ \sum_t^\infty \gamma^t r_t^i \left( \sum_k^{K_{ji}(t)}\nabla_{\theta_j }\log\pi_{\theta_j}(a_k^j|s_k^j) \right)\sum_{t'}^\infty\nabla_{\theta_{j} }\log\pi_{\theta_{j}}(a_{t'}^{j}|s_{t'}^{j})^\top\right]\\
        &=\mathbb E \left[ \sum_t^\infty \gamma^t r_t^i \left( \sum_k^{K_{ji}(t)}\nabla_{\theta_j }\log\pi_{\theta_j}(a_k^j|s_k^j) \right)\sum_{t'}^t\nabla_{\theta_{j} }\log\pi_{\theta_{j}}(a_{t'}^{j}|s_{t'}^{j})^\top\right]\\
        &= \mathbb E \left[ \sum_t^\infty \gamma^t r_t^i \left( \sum_k^{K_{ji}(t)}\nabla_{\theta_j }\log\pi_{\theta_j}(a_k^j|s_k^j) \right)\sum_{t'}^{K_{ji(t)}}\nabla_{\theta_{j} }\log\pi_{\theta_{j}}(a_{t'}^{j}|s_{t'}^{j})^\top \right]
    \end{align*}
where the last equality is because from proposition \ref{prop:global},
\begin{align*}
    0 &= \nabla_{\theta_j} J^i(\boldsymbol{\theta}) - \nabla_{\theta_j} J^i(\boldsymbol{\theta}, \mathcal G) \\
    &= \begin{multlined}[t]
        \mathbb E_\tau \left[ \sum_t^\infty \gamma^t r_t^i \sum_{t'}^t\nabla_{\theta_j}\log \pi_{\theta_j}(a_{t'}^j|s_{t'}^j) \right] 
        - \mathbb E_\tau \left[ \sum_t^\infty \gamma^t r_t^i \sum_{t'}^{K_{ji}(t)}\nabla_{\theta_j}\log \pi_{\theta_j}(a_{t'}^j|s_{t'}^j) \right]
    \end{multlined}\\
    &= \mathbb E_\tau \left[ \sum_t^\infty \gamma^t r_t^i \sum_{{t'}=T_{ji}(t)+1 }^t\nabla_{\theta_j}\log \pi_{\theta_j}(a_{t'}^j|s_{t'}^j) \right],
\end{align*}
\end{proof}

\section{Environment and reward specifications}\label{appx:env}
\textbf{Star-Spread MPE} is an environment that is inspired by the spread MPE benchmark \citep{lowe2017multi}. There are $N$ agents and $N$ landmarks. Agent $0$ is the \emph{hub};
agents $1,\dots,N{-}1$ are \emph{leaves}. Landmarks are placed on a
circle of radius $0.7$ at evenly spaced angles.
Each agent starts at a uniformly random position in $[-1,1]^2$ and
takes one of five discrete actions (no-op, up, down, left, right) per
step. 
The episode horizon is $T_{\text{ep}}=50$.

Denote by $p_i\in\R^2$ the position of agent $i$ and by $\ell_j$ the
$j$-th landmark position. Let
$d_{ij} := \|p_i - \ell_j\|_2$ be the agent-to-landmark distance.
The per-agent reward at each step is
\begin{equation}
\label{eq:reward-hub}
  r_0^{\text{hub}} \;=\; -\sum_{j=1}^{N}\min_{i}\,d_{ij} + \xi,
  \qquad
  r_i^{\text{leaf}} \;=\; -\min_j\,d_{ij} + \xi
  \quad\text{for } i=1,\dots,N{-}1,
\end{equation}
where $\xi\sim\mathcal N(0, 1)$ are sampled independently from standard Gaussian noise to increase the difficulty of the credit assignment problem. 
The hub is charged the {team coverage loss} (the sum of
each landmark's distance to its nearest agent), while each leaf is only charged its own distance to the closest landmark.
In the local-reward setting, the per-agent rewards in
Eq.~\eqref{eq:reward-hub} are delivered individually;
in the global-reward setting, the scalarized
team reward $R_t=\sum_i r_t^i$ is delivered to every agent.

The hub observes $[p_0, \text{all } p_i, \text{all } \ell_j]\in\R^{4N+2}$
(centralized observation), whereas each leaf observes
$[p_i, \text{all } \ell_j]\in\R^{2N+2}$ (its own position and the landmark map).
The state-dependence graph used as the \emph{oracle} graph is
\begin{equation}
\label{eq:oracle-graph}
  G = \max(I_N, \mathbf{1}_{}e_0^\top),
  \qquad \text{equivalently, } G[i,j]=\1\{i=j\}\vee\1\{j=0\},
\end{equation}
i.e.\ column~$0$ and the diagonal are all ones.
Semantically, the hub's next state depends on all other agents' states and actions, and
every agent's next state depends on its own current state. No
leaf--leaf edges are present.

\textbf{Level-Based Foraging} 
\cite{christianos2020shared} is a suite of several gridworld-based environments, where the agents navigate to collect random spawned food across the map. Both agents and food have a level each, and the food is only collected if the sum of the levels of agents collecting it is larger than or equal to the level of the food. Rewards are assigned to agents upon collection based on their level of contribution to the collecting process.

Additionally, we consider a variation of the LBF benchmark.
In this modified version, we introduce a more challenging reward structure that makes it harder for cooperative behavior among agents to emerge. 
Specifically, while the original LBF environment rewards each agent individually based on their level when collecting food, we adopt a winner-takes-all reward scheme: only the agent with the highest level receives a reward. In cases where multiple agents share the highest level, we break ties in a \textit{deterministic} manner.
Furthermore, the food levels are set so that at least two agents must cooperate to collect them (\verb|-coop| setup).
This setup penalizes myopic agents, while having no impact on approaches that rely on a global reward signal.
We run our experiments on 3 scenarios from vanilla LBF and 3 scenarios with winner-take-all reward setups.

\textbf{SMAClite} 
\cite{michalski2023smaclite} is a fully open-source implementation of the popular SMAC benchmark \cite{samvelyan2019starcraft}. SMAClite allows us to modify the environment to extract the agent-specific reward, which SMAC does not support. More specifically, the damage rewards are now attributed directly to the source agents that cause the damage. Other reward signals that cannot be attributed to a single source, for example, winning rewards and enemies' self-damage, are distributed evenly to all agents. 
Upon killing an enemy, the agents are rewarded with a large reward. As a result, while each agent is rewarded for the damage inflicted on the enemies, the local reward scheme can encourage the agents to "steal" kills from others. 
We run our experiments on 6 selected scenarios, ranging from large team coordination, such as 25m\_vs\_30m, to heterogeneous, role-specific scenarios like 3s5z\_vs\_3s6z. 

\subsection{Experimental Settings and Network Architecture}\label{app:hyperparams}
We adopt our backbone algorithms on both MAPPO \cite{yu2022surprising} and IPPO \cite{de2020independent} with Dependence Graph (DG). We compare our approach against the same algorithms under two other reward settings, namely global and local rewards.
Furthermore, we also compare our method to IQL (local reward), QMIX (global reward) \cite{rashid2020monotonic}, and QPLEX (global reward) \cite{wang2020qplex} as other strong value-based baselines from both global and local reward settings. For a fair comparison, all methods utilize parameter sharing between agents.

All the algorithms are based on the PyMARL framework \cite{papoudakis2020benchmarking}. As a result, all the hyperparameters used in this paper closely follow those reported in the framework. For a fair comparison, all PPO-based algorithms with different reward setups are GAE-enabled, with the GAE lambda $\lambda$ set to 0.95 as recommended in \cite{schulman2015high}. 
Since SMAClite is not tested in the PyMARL, we follow suit SMAClite original paper \cite{michalski2023smaclite}  to use the same hyperparameters of PyMARL for SMAC. 
For QMIX and QPLEX, we adopt the finetuned hyperparameters for SMAC, as reported in \cite{hu2021rethinking}, and used it in SMAClite.
The full set of hyperparameters used in PPO-based methods is provided as follows.
\begin{table}[h]
    \centering
    \begin{tabular}{lcccc}
    \toprule
    & SMAClite & LBF & MPE & \\
    \hline
    hidden dimension & 64 & 128& 64\\
    learning rate & 0.0005 & 0.0005 & 0.0005 \\
    reward standardisation & False &False&True\\
    network type & GRU &FC& FC\\
    entropy coefficient & 0.001 &0.001&0.01\\
    clip eps & 0.2 & 0.2 & 0.2 \\
    \bottomrule
    \end{tabular}
    \caption{All PPO-based methods, including our MAPPO-DG and IPPO-DG, use the same hyperparameters for each benchmark.}
    \label{tab:mappo}
\end{table}


All neural networks used in our experiments for the baseline methods, including the policy and value networks, follow the original PyMARL implementations without modification. In our method, the state encoder $E$ is implemented as a three-layer neural network with hidden dimensions of 64 and an output dimension of 64. The single-agent reverse world model $q_{\varphi}$, the action predictor model $q_{\phi}$, and the multi-agent reverse world
model $q_\phi$ share the same architecture: a four-layer neural network with 256 neurons in the first hidden layer and 128 neurons in each of the remaining hidden layers. All networks use ReLU activations and Layer Normalization \cite{ba2016layer}.


\subsection{Detailed Experiment Results}\label{app:detail_exp}

\begin{figure}[t]
\centering
\includegraphics[width=\textwidth]{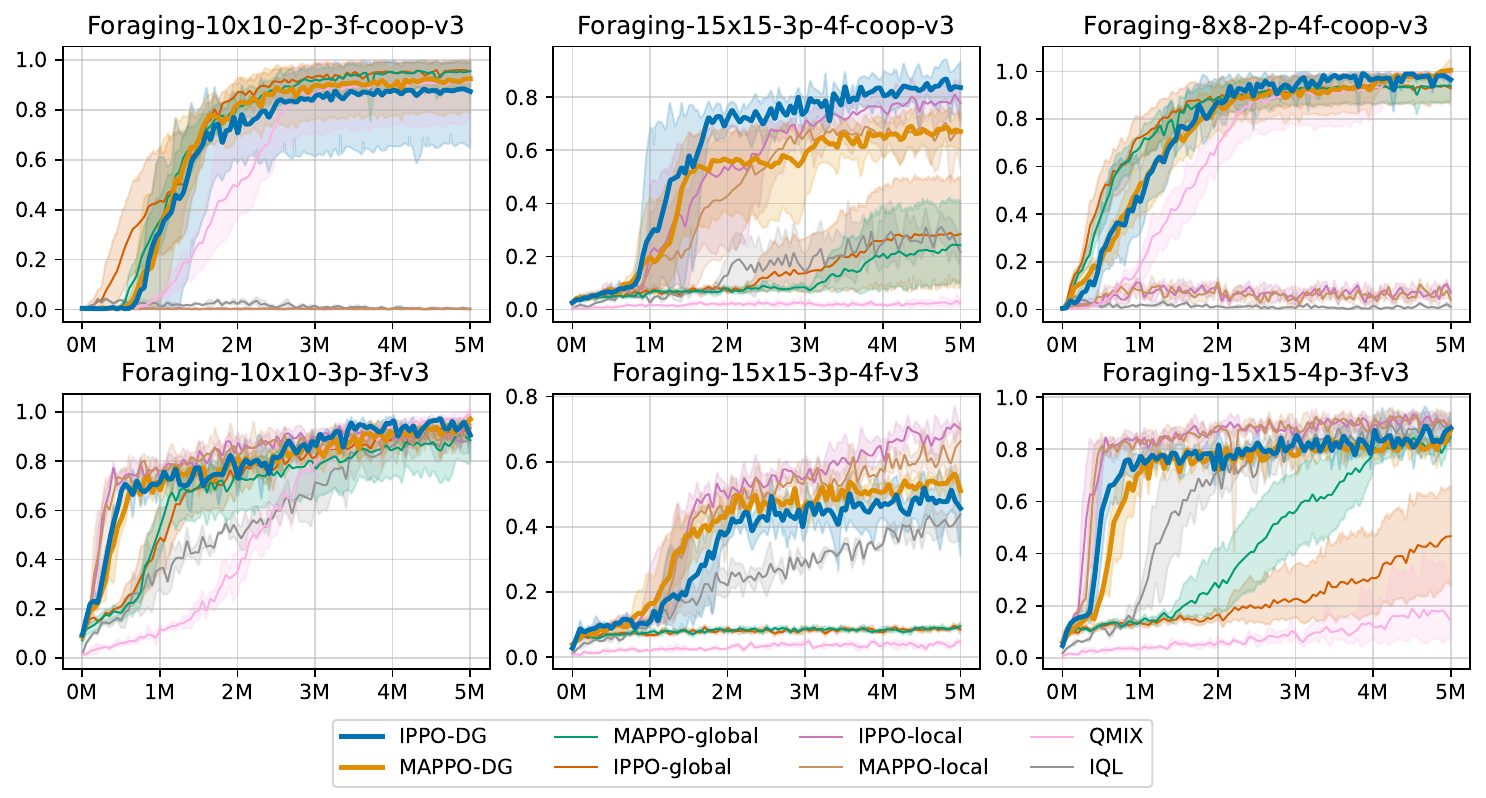}
\caption{Full results on the LBF benchmark.}
\label{fig:lbf_res}
\end{figure}

Figure \ref{fig:lbf_res} presents the complete results corresponding to Figure \ref{fig:lbf_rliable} in the main paper.
We can see that in the vanilla LBF scenarios (bottom row), agents trained with local reward signals consistently outperform those using global rewards. 
Especially in the 15x15-3p-4f environment, where all global methods, including MAPPO-global, QMIX, and IPPO-global struggle significantly due to the sparsity of the reward signals. 
In the 15x15-4p-3f setting, the addition of extra agents alleviates the sparsity issue to some extent, possibly due to the parameter sharing mechanism. 
MAPPO-global is able to converge to an optimal policy, albeit at a slower rate and with higher variance compared to the local reward setup. QMIX, on the other hand, fails to learn effectively even in this slightly less sparse scenario.
IQL using local reward in this setting outperforms QMIX with global rewards on all vanilla LBF setups except for in Foraging-10x10-3p-3f, where they both converge to optimal policies, but IQL learns a bit slower.
In contrast, the winner-take-all with the cooperative setup (top row) poses challenges to the local reward approach. 
In this setup, due to the competitive nature of the reward function, the local reward approach struggles; it fails to learn in 2 out of 3 scenarios. 
Interestingly, some of the completely different branches of algorithms behave quite similarly under the same reward setting. In particular, in 10x10-2p-3f-coop and 8x8-2p-4f-coop, all local reward algorithms fail, including on-policy methods with both MAPPO and IPPO, value-based method IQL, and value decomposition QMIX.
In 15x15-3p-4f-coop, we observe a similar trend as in the winner-take-all setup, where an additional agent improves the performance of local reward with both value-based and on-policy methods, except for QMIX. 
On the other hand, our method remains competitive across all the scenarios, demonstrating its robustness across different setups.

\begin{figure}[t]
\centering
\includegraphics[width=\textwidth]{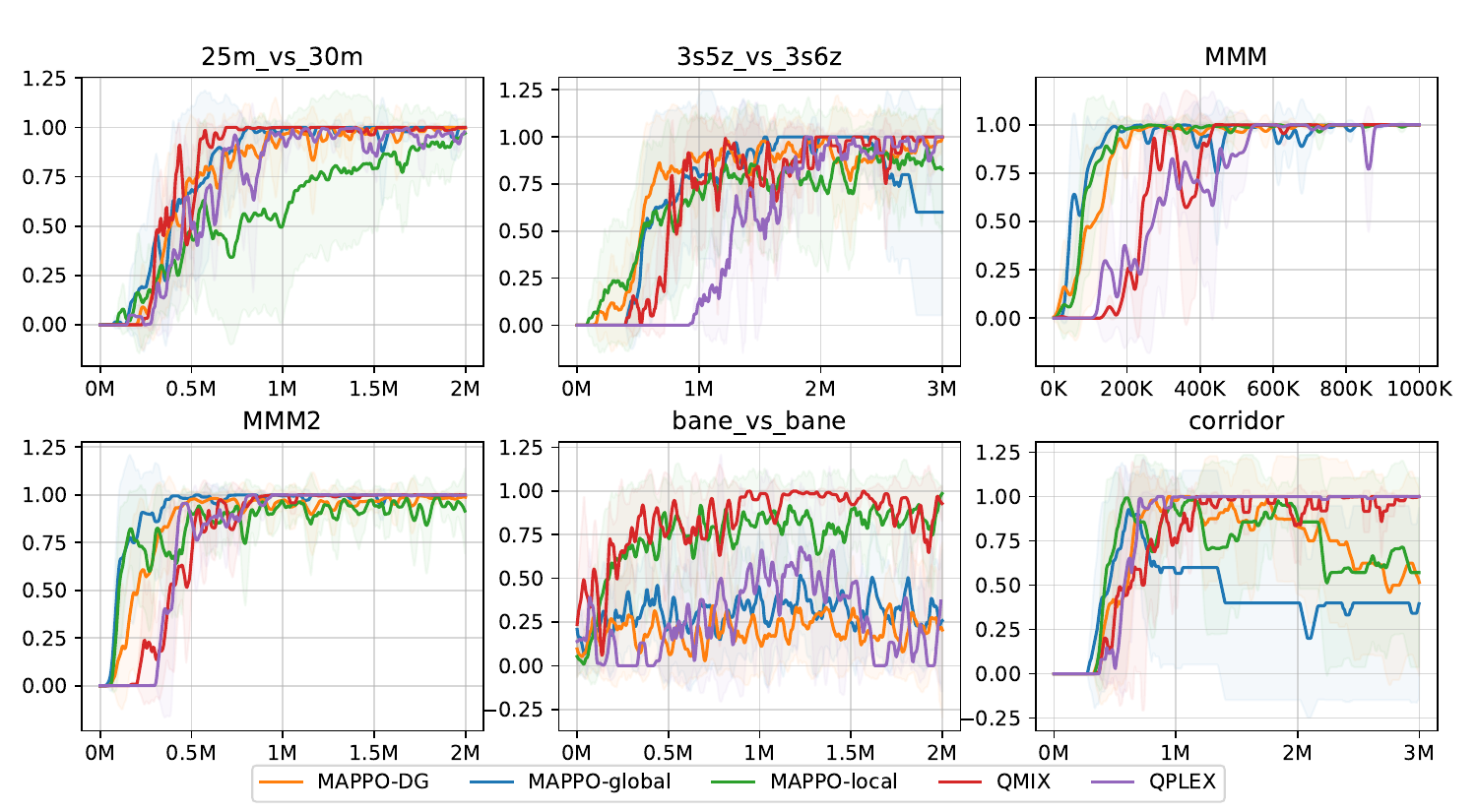}
\caption{Full results on the SMAClite benchmark.}
\label{fig:smaclite_res}
\end{figure}

\begin{figure}
\centering
\includegraphics[width=0.48\textwidth]{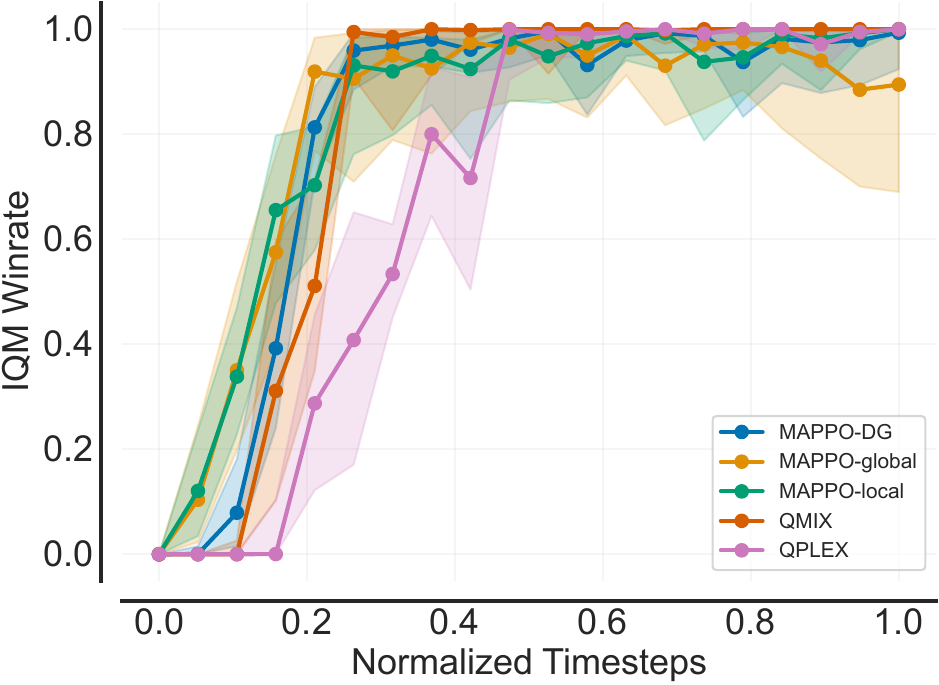}
\caption{Result on the SMAClite benchmark on 6 scenarios.}\label{fig:smaclite_rliable}
\end{figure}
Figures (\ref{fig:smaclite_res}) and (\ref{fig:smaclite_rliable}) present the full results on the SMAClite benchmark.
QMIX achieves consistently strong performance across all scenarios. QPLEX performs quite similarly to QMIX, but slower in 3s5z\_vs\_3s6z and does not learn in bane\_vs\_bane. 
In most scenarios, our method and global MAPPO achieve comparable performance; for example, in 25m\_vs\_30m, both outperform local MAPPO.On corridor, however, our method and local MAPPO outperform global MAPPO. We attribute the unstable learning behavior observed in all MAPPO-based methods to suboptimal hyperparameters adopted from PyMARL.

\subsection{Algorithm}
\begin{algorithm}[]
\caption{Multi-Agent PPO (MAPPO and IPPO) with Dependence Graph.\label{algorithm2}}
\begin{algorithmic}[1]

\STATE Initialize shared actor parameters $\theta$, centralized critic parameters $\vartheta$, encoder $E$, single-agent reverse world model $q_{\varphi}$, action prediction model $q_{\phi}$, multi-agent reverse world model $q_{\psi}$

\FOR{iteration $= 1$ to $N_{\text{iter}}$}

    \STATE Initialize empty buffer $\mathcal{D}$

    \FOR{episode $= 1$ to $N_{\text{episodes}}$}
        
        \FOR{$t = 1$ to $T$}
            \FOR{each agent $i = 1,\dots,N$}
                \STATE Sample action $a_t^i \sim \pi_\theta(\cdot \mid o_t^i)$
            \ENDFOR
            
            \STATE Execute joint action $\mathbf{a}_t = (a_t^1,\dots,a_t^N)$
            \STATE Observe rewards $\{r_t^i\}$, next state $s_{t+1}$, next obs $\{o_{t+1}^i\}$, done flag $d_t$
            
            \STATE Store $(s_t, o_t^i, a_t^i, r_t^i, \log p_t^i, d_t, s_{t+1})$ in $\mathcal{D}$ for all $i$
            
        \ENDFOR
    \ENDFOR

    \FOR{each timestep $t$ in $\mathcal{D}$}
    \STATE Initialize Adjacency matrix $\text{A}_t=\text{diag}(1)$
    \FOR{each agent $i$ in $\mathcal N$}
        \STATE Compute $V_t^i = V^i_\vartheta(s_t)$ if MAPPO and $ V^i_\vartheta(o^i_t)$ if IPPO
        \STATE Compute latent state $z^i_t = E(o_t^i)$
        \STATE Compute action distribution entropy $H_1=H(q_{\phi}(z_t^i))$

        \FOR{each agent $j$ in $\mathcal{N}$}
            \STATE Compute latent state $z^j_t = E(o_t^j)$
            \STATE Compute latent next state $z^j_{t+1} = E(o_{t+1}^j)$
        \STATE Compute multi-agent reverse model distribution entropy $H_2 = H(q_{\psi}(z_t^i, z_{t}^j, z_{t+1}^j))$
        \STATE Compute edge $\text{A}_{ij}=1(H_2/H_1 < c)$
        \ENDFOR
    \ENDFOR
    \ENDFOR

    \STATE Compute dependence truncated Advantage $\{A^i_t\}_{t=0}^{T-1}$ in Algorithm \ref{alg:gae} using $\{V_t^i\}_{t=0}^{T-1}$ and $\{\text{A}_t^i\}_{t=0}^{T-1}$
    \FOR{epoch $= 1$ to $K$}
        \FOR{each minibatch $\mathcal{B}$}
            
            \FOR{each sample $(o_t^i, a_t^i, \log p_{\text{old}}^i, A_t^i)$ in $\mathcal{B}$}
                \STATE $\log p^i = \log \pi_\theta(a_t^i \mid o^i_t)$
                \STATE $r^i = \exp(\log p^i - \log p_{\text{old}}^i)$
                \STATE $L_{\text{clip}}^i = \min\left(r^i A^i_t,\ \text{clip}(r^i,1-\epsilon,1+\epsilon)A^i_t\right)$
            \ENDFOR
            \STATE $L_{\text{actor}} = -\frac{1}{|\mathcal{B}|}\sum_i L_{\text{clip}}^i$

            \STATE $L_{\text{critic}} = \frac{1}{|\mathcal{B}|}\sum (V^i_\vartheta(s) - V^i_{\text{target}})^2$ if MAPPO and $\frac{1}{|\mathcal{B}|} \sum (V^i_\vartheta(o^i) - V_{\text{target}}^i)^2$ if IPPO

            \STATE $L_{\text{entropy}} = -\beta \, \mathbb{E}[\mathcal{H}(\pi_\theta(\cdot|o^i))]$

            \STATE $L = L_{\text{actor}} + c_v L_{\text{critic}} + L_{\text{entropy}}$
            \STATE Update $\theta, \vartheta$ using gradients of $L$
            
        \ENDFOR
    \ENDFOR
    \STATE Compute latent states $\{z_t^i\}_{t, i} = \{E(o_t^i)\}$
    \STATE Compute Encoder loss $L_E = \frac{1}{|\mathcal D| |\mathcal N|}\sum_{i, t} \text{CE}(q_{\varphi}(z_t^i, z_{t+1}^i), a_t^i)$
    \STATE Update Encoder $E$, $\varphi$ using gradient of $L_E$
    \STATE Detach gradient in $\{z_t^i\}_{t, i}$    
    \STATE Compute reverse model loss $L_{\text{rv}}=\frac{1}{|\mathcal D| |\mathcal N|^2}\sum_{i, j, t}\left[ \text{CE}(q_{\phi}(z_t^i), a^i_t) + \text{CE}(q_{\psi}(z_t^i, z_t^j, z_{t+1}^j), a^i_t) \right] $
    \STATE Update $\phi$, $\psi$ using gradient of $L_\text{rv}$
\ENDFOR

\end{algorithmic}
\end{algorithm}


\clearpage

\end{document}